
\typeout{IJCAI--22 Instructions for Authors}


\documentclass{article}
\pdfpagewidth=8.5in
\pdfpageheight=11in
\usepackage{ijcai22}

\usepackage{times}
\usepackage{soul}
\usepackage{url}
\usepackage[hidelinks]{hyperref}
\usepackage[utf8]{inputenc}
\usepackage[small]{caption}
\usepackage{graphicx}
\usepackage{amsmath}
\usepackage{amsthm}
\usepackage{booktabs}
\usepackage{algorithm}
\usepackage{algorithmic}
\usepackage{subfigure}
\usepackage{multirow}
\usepackage{hhline}
\usepackage{amssymb}
\usepackage{epstopdf}
\usepackage{color}
\usepackage{enumitem,picinpar}
\usepackage{newfloat}
\usepackage{listings}
\usepackage{textcomp,moresize}
\usepackage{bbm}
\usepackage{latexsym}
\usepackage{epsfig}
\usepackage[flushleft]{threeparttable}
\usepackage{amsfonts}

\urlstyle{same}

\DeclareMathOperator*{\argmin}{arg\,min}



\newtheorem{theorem}{Theorem}
\newtheorem{lemma}{Lemma}
\newtheorem{definition}{Definition}
\newtheorem{assumption}{Assumption}
\newtheorem{property}{Property}
\newtheorem{remark}{Remark}

\newtheorem{proof*}{Proof}

\def\wrt{\textit{w.r.t.~}}

\newcommand{\E}{\mathbb{E}}

\newcommand{\G}{\mathcal{G}}
\newcommand{\B}{\mathcal{B}}
\newcommand{\PP}{\mathcal{P}}
\newcommand{\F}{\mathcal{F}}




\pdfinfo{
/TemplateVersion (IJCAI.2022.0)
}

\title{Distributed Dynamic Safe Screening Algorithms for Sparse Regularization}

\author{
Runxue Bao
\and
Xidong Wu\and
Wenhan Xian\And
Heng Huang
\affiliations
Electrical and Computer Engineering Department, University of Pittsburgh, PA, USA
\emails
\{runxue.bao, xidong\_wu, wex37, heng.huang\}@pitt.edu
}

\begin{document}

\maketitle

\begin{abstract}
Distributed optimization has been widely used as one of the most efficient approaches for model training with massive samples. However, large-scale learning problems with both massive samples and high-dimensional features widely exist in the era of big data. Safe screening is a popular technique to speed up high-dimensional models by discarding the inactive features with zero coefficients. Nevertheless, existing safe screening methods are limited to the sequential setting.  In this paper, we propose a new distributed dynamic safe screening (DDSS) method for sparsity regularized models and apply it on shared-memory and distributed-memory architecture respectively, which can achieve significant speedup without any loss of accuracy by simultaneously enjoying the sparsity of the model and dataset. To the best of our knowledge, this is the first work of distributed safe dynamic screening method.  Theoretically, we prove that the proposed method achieves the linear convergence rate with lower overall complexity and can eliminate almost all the inactive features in a finite number of iterations almost surely. Finally, extensive experimental results on benchmark datasets confirm the superiority of our proposed method.
\end{abstract}

\begin{table*}[t]
\renewcommand\arraystretch{1.1}
	\center
	\caption{Summary of dynamic safe  screening methods. ``Model'' refers to the model it can solve where ``MTL \& MLR'' means multi-task Lasso and multinomial logistic regression. ``Safe'' represents there are no active features eliminated. ``Generalized'' means whether it is limited to a specific model. ``Distributed'' represents whether it can work on  distributed-memory architecture. ``Scalablity'' represents whether it is scalable with sample size $n$.}
  \resizebox{\textwidth}{!}{
	\begin{tabular}{c|c|c|c|c|c}
		\hline
		{\textbf{Reference}}  &  {\textbf{Model}}  &  {\textbf{Safe}}
	 &  {\textbf{Generalized}}  &  {\textbf{Distributed}}    
	  &  {\textbf{Scalablity}} \\
		\hline
		 \cite{fercoq2015mind} &  Lasso  & Yes & No & No  & No \\
		\cite{ndiaye2016gap} &  Sparse-Group Lasso   & Yes & No  & No & No \\\
	   \cite{shibagaki2016simultaneous}   & Sparse SVM & Yes  & No  & No & No  \\ 
		\cite{ndiaye2017gap} &  MTL \& MLR   & Yes & No & No  & No \\
		 \cite{rakotomamonjy2019screening}   & Proximal Weighted Lasso & Yes & No  & No & No  \\
	 \cite{bao2020fast}   & OWL Regression & Yes & No  & No & No  \\
		\hline
		 DDSS (Ours)   & Problem (\ref{eq:general})  & Yes & Yes & Yes  &  Yes  \\
		\hline
	\end{tabular}}
\label{table:method1}
\end{table*}

\begin{table*}[t]
\renewcommand\arraystretch{1.1}
	\center
\caption{Comparison between PSE and our DDSS algorithm.  ``Data Sparsity'' represents whether it can benefit from the sparsity of data.}
 \resizebox{\textwidth}{!}{
	\begin{tabular}{c|c|c|c|c|c|c}
		\hline
	{\textbf{Reference}}  &  {\textbf{Model}}  &  {\textbf{Dynamic}}
	 &  {\textbf{Safe}} &  {\textbf{Distributed}}   & {\textbf{Scalablity}}  &  {\textbf{Data Sparsity}}  \\
		\hline
PSE \cite{li2016parallel} &  Lasso  & No & No & No  & No  & No    \\
		\hline
		 DDSS (Ours)   & Problem (\ref{eq:general})  & Yes & Yes & Yes &  Yes & Yes \\
		\hline
	\end{tabular}}
	\label{table:method2}
\end{table*}

\section{INTRODUCTION}
Learning sparse representations plays a important role in many  machine learning and signal processing applications \cite{lustig2008compressed,shevade2003simple,wright2009sparse,wright2010sparse,chen2021learnable,bian2021optimization,zhang2021ddn2}. In the past decades, many models with sparse regularization have achieved great successes in high-dimensional scenarios by encouraging the model sparsity \cite{tibshirani1996regression,ng2004feature,yuan2006model,bao2019efficient}. Let $A=\left[a_{1}, \cdots, a_{n}\right]^{\top} \in \Re^{n \times p}$, in this paper, we consider the following composite optimization problem: 
\begin{eqnarray} 
\min\limits_{x \in \Re^{p} } \PP(x):=  \F(x) +\lambda\Omega(x).
\label{eq:general}
\end{eqnarray}
where $x$ is the model coefficient, $\Omega(x)$ is the block-separable sparsity-inducing norm,  $\F(x)  =  \frac{1}{n} \sum\nolimits_{i=1}^{n}f_i(a_i^\top x)$ is the data-fitting loss,  and $\lambda$ is the regularization parameter. We denote $\F_i(x) = f_i(a_i^\top x)$ for simplicity. Given partition $\G$ of the coefficients, we denote the sub-matrix of $A$ with the columns of $\G_j$ as $A_{j} \in \Re^{n \times |\G_j|}$ and have $\Omega(x) = \sum\nolimits_{j=1}^{q} \Omega_j(x_{\G_j})$.

Distributed learning has been actively studied in machine learning community due to its capability for tackling big data computations and numerous large-scale applications. In literature, many distributed learning  methods have been proposed to accelerate different optimization algorithms on shared-memory architecture \cite{langford2009slow,recht2011hogwild,reddi2015variance,zhao2016fast,mania2017perturbed,leblond2017asaga,meng2017asynchronous,pedregosa2017breaking,zhou2018simple} and distributed-memory architecture \cite{agarwal2012distributed,dean2012large,zhang2014asynchronous,lian2015asynchronous,zhang2016asynchronous}. For smooth
optimization problems, asynchronous stochastic methods, e.g., Hogwild! \cite{recht2011hogwild}, Lock-Free SVRG \cite{reddi2015variance}, AsySVRG \cite{zhao2016fast}, KroMagnon \cite{mania2017perturbed}, and ASAGA \cite{leblond2017asaga} were proposed. Furthermore, AsyProxSVRG \cite{meng2017asynchronous}, AsyProxSBCDVR \cite{pmlr-v84-gu18a}, ProxASAGA \cite{pedregosa2017breaking}, and AsyMiG \cite{zhou2018simple} were proposed to solve non-smooth composite optimization problems. However, large-scale problems with both massive samples and high-dimensional features widely exist in the era of big data, which still suffer huge burden for computation and memory costs.

By exploiting the sparsity, safe screening is an effective method to accelerate high-dimensional sparse models by pre-identifying inactive features and thereby avoiding the useless computation for training. Safe static screening \cite{Laurent2012safe} was first proposed for $l_{1}$ regularized problems, which only performs once prior to the optimization. The strong screening method \cite{tibshirani2012strong} was proposed for Lasso via heuristic strategies and could discard features wrongly. Furthermore, \cite{wang2013lasso} proposed the sequential method for Lasso, which needs to estimate the exact dual solution and could be unsafe in practice. Recently, an effective dynamic method \cite{fercoq2015mind} was proposed for Lasso with good performance. Due to its better empirical and theoretical results, many dynamic rules \cite{shibagaki2016simultaneous,ndiaye2015gap,ndiaye2016gap,ndiaye2017gap,rakotomamonjy2019screening} were proposed for a broad class of sparse models with separable penalties. Recently, \cite{bao2020fast} introduces a dynamic screening rule to effectively handle the inseparable OWL regression \cite{bao2019efficient} via an iterative strategy. Table \ref{table:method1} summarizes existing representative safe dynamic screening methods. However, all of these methods are limited to sequential (single-machine) setting and thus huge requirement of computational complexity hinders its application on large-scale datasets.  Therefore, distributed safe dynamic screening algorithms are promising and sorely needed to accelerate existing training methods for large-scale problems.

In this paper, we propose a new distributed dynamic screening method to solve sparse models and apply it on shared-memory  (Sha-DDSS) and distributed-memory (Dis-DDSS) architecture, which can accelerate the training process significantly without any loss of accuracy. Specifically, DDSS not only conducts the screening to eliminate inactive features to enjoy the sparsity of the model, but also conducts sparse proximal gradient update with the nonzero partial gradients to enjoy the sparsity of the dataset. To further accelerate DDSS, we also reduce the gradient variance over the active set and thus we can utilize a constant step size to achieve the linear convergence. On distributed-memory system, to reduce the computational burden of the server node, we utilize a decouple strategy to improve the scalability of DDSS \wrt the number of workers.  

\paragraph{Contributions.}
The main contributions of our work can be summarized as follows. \textbf{First}, we propose a new distributed dynamic safe screening framework for generalized sparse models, which is easy-to-implement on the both shared-memory and distributed-memory architecture. To the best of knowledge, this is the first work of distributed dynamic safe screening.
\textbf{Second}, we rigorously prove the proposed DDSS method can achieve linear convergence rate $O(\log(1/\epsilon))$, reduce the per-iteration cost from $O(p)$ to $O(r)$ where $r \ll p$, and finally achieve a lower overall computational complexity under the strongly convex condition. \textbf{Third}, we prove almost sure finite time identification of the active set to confirm the effectiveness of our DDSS method.
Finally, we empirically show that our proposed method can achieve significant acceleration and linear speedup property.

\subsection{Related Works}

\paragraph{Parallel Static Screening}
Parallel static screening (PSE) \cite{li2016parallel} is most related work to ours. Table \ref{table:method2} summarizes the advantages of our DDSS method over PSE. First, DDSS is dynamic and conducted during the whole training process. Thus, DDSS can accelerate and meanwhile benefit from the convergence of the optimization algorithm. Second, our DDSS is safe for the training and can guarantee the model accuracy. Third,  DDSS is stochastic and can scale well on both samples and features. Fourth, DDSS can enjoy the data sparsity by performing sparse proximal updates to further accelerate the training. Lastly, DDSS can solve Problem (\ref{eq:general}).

\paragraph{Asynchronous Stochastic Method}
An asynchronous doubly stochastic method was proposed in \cite{meng2017asynchronous}, which performs coordinate updates without considering any sparsity and thus can be very slow. In \cite{pmlr-v84-gu18a}, they proposed a asynchronous doubly stochastic method for group regularized learning problems. Moreover, \cite{leblond2017asaga} proposed an asynchronous sparse incremental gradient method, which does not enjoy the sparsity of the model and cannot be easily incorporated into the dynamic screening method to accelerate the training. However, our DDSS can simultaneously enjoy the sparsity of the model and the dataset.

\section{PROPOSED METHOD}
We first propose the distributed dynamic safe screening (DDSS) method and then apply it to shared-memory and distributed-memory architecture respectively.

\subsection{Distributed Dynamic Safe Screening}
\begin{algorithm}[H]
\renewcommand{\algorithmicrequire}{\textbf{Input:}}
\renewcommand{\algorithmicensure}{\textbf{Output:}}
\caption{Sha-DDSS-Naive}
\begin{algorithmic}[1]
\STATE {\bfseries Input:} $x^0_{\B_{0}} \in \Re^p$, step size $\eta$,  inner loops $K$
\FOR{$s=0$ {\bfseries to} $S-1$}
\STATE All threads parallelly compute $\nabla \F(x^0_{\B_{s}})$
\STATE Compute $y^s$ by (\ref{eq:scaling}) and $\B_{s+1}$ from $\B_{s}$ by (\ref{eq:screening})
\STATE Update $A_{\B_{s+1}}, x^0_{\B_{s+1}}$
\STATE For each thread, do:
\FOR{$t=0$ {\bfseries to} $K-1$}
\STATE Read $\hat{x}^t_{\B_{s+1}}$ from the shared memory
\STATE Randomly sample $i$ from $\{1,2,\ldots, n\}$
\STATE $v^{s}_t=\nabla f_{i}(a_{i,\B_{s+1}}^\top\hat{x}^t_{{\B_{s+1}}}) $
\STATE Update $x_{\B_{s+1}}^{t+1} =\operatorname{prox}_{\eta \lambda \Omega}(\hat{x}^{t}_{{\B_{s+1}}}-\eta v^{s}_{t})$
\ENDFOR
\STATE $x_{\B_{s+1}} = x_{{\B_{s+1}}}^{K}, x^0_{\B_{s+1}} = x_{\B_{s+1}}$
\ENDFOR
\end{algorithmic}
\label{algorithm:Sha-DDSS-Naive}
\end{algorithm}

To enjoy the sparsity of the model coefficients, we first give a naive implementation of DDSS on shared-memory architecture in Alg. \ref{algorithm:Sha-DDSS-Naive} and on distributed-memory architecture in Alg. \ref{algorithm:Dis-DDSS-NaiveServer} and  \ref{algorithm:Dis-DDSS-NaiveWorker} respectively. We mainly take the algorithm on shared-memory architecture as the example to illustrate our idea. During the training, DDSS only need solve a sub-problem with constantly decreasing size by discarding useless features.

Specifically, Alg. \ref{algorithm:Sha-DDSS-Naive} has two loops. We denote the original problem as $\PP_0$ and the full set as $\B_0$.  At the $s$-th iteration of the outer loop,  we denote the active set  as $\B_s$ and suppose $\B_s$ has $q_s$ active blocks with total $p_s$ active features. Thus, sub-problem $\PP_{s}$ is over set  $\B_s$. Then we can compute the dual $y^s$ as
\begin{eqnarray} 
y^s= - \nabla \mathcal{F}( x^0_{\B_{s}}) / \max (1, \Omega^{D}(A^{\top}_{\B_{s}} \nabla \mathcal{F}( x^0_{\B_{s}}))/\lambda ),
\label{eq:scaling}
\end{eqnarray}
where dual norm  $\Omega^{D}(u)=\max_{\Omega(v) \leq 1}\langle v, u\rangle $. With the obtained dual variable, we can eliminate inactive feature blocks (see details in \cite{ndiaye2017gap,bao2020fast}) for $\forall j \in \B_{s}$ as
\begin{eqnarray} 
 \Omega^D_j(A^\top_{j} y^s) + \Omega^D_j(A_{j}) \sqrt{2L( \PP(x^0_{\B_{s}})-D(y^s))} < n \lambda,
\label{eq:screening}
\end{eqnarray}
to update $\B_{s+1}$ where $L$ is the Lipschitz constant. The dual objective $D(y^s)$ of $\PP_{s}$ can be computed as:
\begin{eqnarray} 
 \max\limits_{y^s}     D(y^s):= -\frac{1}{n}\sum\nolimits_{i=1}^{n} f_{i}^{*}(-y_{i}^s), \nonumber \\
s.t. \quad \Omega_{j}^{D}(A_{j}^{\top} y^s) \leq n \lambda, \quad \forall j \in {1,\ldots,q_s}.
\label{eq:dual}
\end{eqnarray}

For the inner loop, all the updates are conducted on $\B_{s+1}$. First, each thread inconsistently read $\hat{x}^{t}_{{\B_{s+1}}}$ from the shared memory and randomly choose sample $i$ to compute the stochastic gradient on $\B_{s+1}$. Then, the proximal step is conducted with the stochastic gradient. By (\ref{eq:screening}), we can constantly reduce the model size and the parameter size to accelerate the training by exploiting the sparsity of the model. Since each variable $x_i$ discarded by the screening  must be zero for the optimum solution, this method is safe for the training.

\begin{algorithm}[H]
\renewcommand{\algorithmicrequire}{\textbf{Input:}}
\renewcommand{\algorithmicensure}{\textbf{Output:}}
\caption{Dis-DDSS-Naive (Server Node)}
\begin{algorithmic}[1]
\FOR{$s=0$ {\bfseries to} $S-1$}
\STATE flag = True
\STATE Broadcast flag and $x^0_{\B_{s}}$ to all workers
\STATE Receive gradients from all workers 
\STATE $\nabla \F(x^0_{\B_{s}}) = \frac{1}{n}\sum\nolimits_{k=1}^{l} \nabla \F^k (x^0_{\B_{s}})$
\STATE Compute $y^s$ by (\ref{eq:scaling})
\STATE Update $\B_{s+1} \subseteq \B_{s}$ by (\ref{eq:screening})
\STATE Broadcast $\B_{s+1}$ and $\nabla \F(x^0_{\B_{s}})$ to all workers
\STATE flag = False
\STATE Broadcast flag to all workers
\FOR{$t=0$ {\bfseries to} $K-1$}
\STATE Receive $v^{s}_{t}$ from worker
\STATE $x^{t+1}_{\B_{s+1}} = \operatorname{prox}_{\eta \lambda \Omega}(x^{t}_{\B_{s+1}}-\eta v_{t})$
\ENDFOR
\STATE $x_{\B_{s+1}} = x^{K}_{\B_{s+1}},  x^0_{\B_{s+1}} = x_{\B_{s+1}}$
\ENDFOR
\end{algorithmic}
\label{algorithm:Dis-DDSS-NaiveServer}
\end{algorithm}

\begin{algorithm}[H]
\renewcommand{\algorithmicrequire}{\textbf{Input:}}
\renewcommand{\algorithmicensure}{\textbf{Output:}}
\caption{Dis-DDSS-Naive (Worker Node $k$)}
\begin{algorithmic}[1]
\IF{flag = True}
\STATE Receive  $x^{0}_{\B_{s}}$ from server
\STATE Compute and send gradient $ \nabla \F^k(x^0_{\B_{s}}) = \sum\nolimits_{i\in n_k} \nabla  f_i(a_{i,\B_{s}}^\top x^0_{\B_{s}})$
\STATE Receive $ \B_{s+1}$ from server
\STATE Update $A_{i\in n_k, \B_{s+1}},x^0_{\B_{s+1}}$
\ELSE
\STATE Receive  $x^{d(t)}_{\B_{s+1}}$ from server
\STATE Randomly sample $i$ from  $\{1,2,\ldots, n_k\}$
\STATE Compute 
$v_{t}^{s}=\nabla f_{i}(a_{i,\B_{s+1}}^\top x^{d(t)}_{\B_{s+1}}) $ 
\STATE Send $v^{s}_{t}$  to server
\ENDIF
\end{algorithmic}
\label{algorithm:Dis-DDSS-NaiveWorker}
\end{algorithm}

\paragraph{Variance Reduction on the Active Set} 
However, the variance of the gradient estimation in Alg. \ref{algorithm:Sha-DDSS-Naive} caused by stochastic sampling does not converge to zero. Thus, we have to use a diminishing step size and can only obtain very little progress for each update. Thus,
Alg. \ref{algorithm:Sha-DDSS-Naive} can only attain a sublinear convergence rate even when $\PP$ is strongly convex.

Since the full gradient has been computed by the outer loop for the  elimination step, inspired by the variance-reduced technique in \cite{xiao2014proximal,li2021fully}, we can adjust the gradient estimation over $\B_{s+1}$ with the exact gradient  from the outer loop  without additional computational costs as 
\begin{eqnarray} 
v^{s}_{t}&=&\nabla f_{i}(a_{i,\B_{s+1}}^\top \hat{x}^t_{\B_{s+1}})-\nabla f_{i}(a_{i,\B_{s+1}}^\top x^0_{\B_{s+1}})
\nonumber \\
&& + \nabla \F( x^0_{\B_{s+1}}),
\end{eqnarray} 
which can guarantee that the  variance of stochastic gradients asymptotically converges to zero. Therefore, we can use a constant step size to achieve more progress for each iteration and finally achieve a linear convergence rate for strongly convex function.

\paragraph{Sparse Proximal Gradient Update}  In practice, sparsity widely exists in large-scale datasets. To utilize the sparsity of the dataset, we only need to update the blocks that contains nonzero partial gradients. Thus, some blocks might be updated for more times while others for less times. Inspired by \cite{leblond2017asaga} for Proximal SAGA, we define a block-wise reweighting matrix to make a weighted projection on the blocks.   Specifically, we define $\Psi_i$ as the set of blocks that intersect the nonzero coefficients of $\nabla f_i$. Let $n_{\G}$ be the number of occurrences that $\G \in \Psi_{i}$, if $n_{\G}>0$, we define  $d_{\G}=n/n_{\G}$. Otherwise, we can ignore that block directly.  Thus, we can define diagonal matrix  $\left[D_{i}\right]_{\G, \G}=d_{\G} I_{\left|\G\right|}$ for each block $i$ and thus the gradient over set $\B_{s+1}$ can be formulated as 
\begin{eqnarray} 
v^{s}_{t} &=& \nabla f_{i}(a_{i,\B_{s+1}}^\top \hat{x}^t_{ \B_{s+1}})-\nabla f_{i}(a_{i,\B_{s+1}}^\top x^0_{\B_{s+1}})
\nonumber \\ &&+ D_{i, {\B_{s+1}}} \nabla \F( x^0_{\B_{s+1}}).
\label{eq:vr_gradient}
\end{eqnarray} 
Thus, we only need compute a sparse gradient and conduct a sparse update over the active set and the computational cost is further reduced.

On the other hand, the proximal operator of original Problem (\ref{eq:general}) is computed  as
\begin{eqnarray} 
\operatorname{prox}_{\eta \lambda \Omega}\left(x^{\prime}\right)=\argmin _{x \in \Re^{p}} \frac{1}{2 \eta}\left\|x-x^{\prime}\right\|^{2}+\lambda \Omega(x),
\end{eqnarray} 
which needs to update all the coordinates for each iteration. Considering the sparsity of the dataset again, we only need to update the blocks that contains nonzero partial gradients. Thus, based on the reweighting matrix $D$, we use a block-wise weighted norm $\phi_{i}(x)=\sum_{\G \in \Psi_{i}} d_{\G} \Omega_{\G}(x)$ to displace $\Omega(x)$. Note it is easy to verify that $\E \phi_{i}(x)=\Omega(x)$. Thus, the new sparse proximal operator can be computed as
\begin{eqnarray} 
\operatorname{prox}_{\eta \lambda \phi_{i}}\left(x^{\prime}\right)=\argmin _{x \in \Re^{p}} \frac{1}{2 \eta}\left\|x-x^{\prime}\right\|^{2}+\lambda \phi_{i}(x).
\end{eqnarray} 
Since we only update the blocks in $\Psi_i$, which could be much less than the one needs a full pass of $p$ coordinates due to the sparsity, we can save much computation and memory cost here. To sum it up, we can conduct the sparse gradient update and sparse proximal operator to accelerate the training by enjoying the sparsity of the dataset.

\paragraph{Decoupled Proximal Update}
On distributed-memory architecture, multiple workers compute the gradients and send them to the server. The server computes the proximal operator. When the proximal step is time-consuming, doing this in the server would be the computational bottleneck of the whole algorithm. \cite{li2016make} proposed a decoupled method to off-load the computational task of the proximal step to workers. Thus, the server only does simple addition computation, which can achieve a sublinear converge rate and perform better than the coupled method. 

To relieve the computation cost of the server in our algorithm, the proximal mapping step is computed by workers and the server only needs to do the element-wise computation. The workers conduct the proximal operator as
\begin{eqnarray} 
x_{\B_{s+1}}^{t+1} =\operatorname{prox}_{\eta \lambda \phi_{i}}(x^{t}_{{\B_{s+1}}}-\eta v^{s}_{t}),
\end{eqnarray} 
and send the difference 
\begin{eqnarray} 
\delta^s_{t} = \operatorname{prox}_{\eta \lambda \phi_{i}}(x^{d(t)}_{\B_{s+1}}-\eta v^{s}_{t})-x^{d(t)}_{\B_{s+1}},
\end{eqnarray} 
between the parameter $x^{d(t)}_{\B_{s+1}}$ and the output of the proximal operator to the server. Therefore, the server only does simple addition computation, which makes the algorithm suitable to parallelize to achieve linear speedup property and can be accelerated via increasing the number of workers. 

\subsection{DDSS on Shared-Memory Architecture}
\begin{algorithm}[H]
\renewcommand{\algorithmicrequire}{\textbf{Input:}}
\renewcommand{\algorithmicensure}{\textbf{Output:}}
\caption{Sha-DDSS}
\begin{algorithmic}[1]
\FOR{$s=0$ {\bfseries to} $S-1$}
\STATE All threads parallelly compute $\nabla \F(x_{\B_{s}}^{0})$
\STATE Compute  $y^s$ by (\ref{eq:scaling}) and $\B_{s+1}$ from $\B_{s}$ by (\ref{eq:screening})
\STATE Update $A_{\B_{s+1}},x^0_{\B_{s+1}},\nabla \F(x^0_{\B_{s+1}})$
\STATE For each thread, do:
\FOR{$t=0$ {\bfseries to} $K-1$}
\STATE Read $\hat{x}^{t}_{\B_{s+1}}$ from the shared memory
\STATE Randomly sample $i$ from $\{1,2,\ldots, n\}$
\STATE Compute $v^{s}_{t}$ by (\ref{eq:vr_gradient})
\STATE $\delta^s_{t} = \operatorname{prox}_{\eta \lambda \phi_{i}}(\hat{x}^{t}_{\B_{s+1}}-\eta v^{s}_{t})-\hat{x}^t_{\B_{s+1}}$
\STATE $x_{\B_{s+1}}^{t+1} = x_{\B_{s+1}}^{t} + \delta^s_{t}$
\ENDFOR
\STATE $x_{\B_{s+1}} = x^K_{\B_{s+1}}, x^0_{\B_{s+1}} = x_{\B_{s+1}}$
\ENDFOR
\end{algorithmic}
\label{algorithm:Sha-DDSS}
\end{algorithm}

On shared-memory architecture, our Sha-DDSS algorithm is in Alg. \ref{algorithm:Sha-DDSS}. Suppose we have $l$ cores, in the outer loop, all threads parallelly compute $\nabla \F(x_{\B_{s}}^{0})$ and $y^s$, and perform the elimination. With new set $\B_{s+1}$, we update $A_{\B_{s+1}}$, $x^0_{\B_{s+1}}$, and $\nabla \F(x^0_{\B_{s+1}})$. In the inner loop, the algorithm optimizes over  $\B_{s+1}$. Multiple threads update the parameter asynchronously, which means the parameter can be read and written without locks. Specifically, each thread inconsistently read $\hat{x}^{t}_{\B_{s+1}}$ from the shared memory and then choose sample $i$ from $\{1,2,\ldots,n\}$. As (\ref{eq:vr_gradient}), we compute the gradient $v^{s}_{t}$ over $\B_{s+1}$. Then we conduct the proximal step, compute the update $\delta^s_{t}$, and add it to the shared memory.

Notably, first, at the $s$-th iteration, by exploiting the sparsity of the model, our Alg. \ref{algorithm:Sha-DDSS} only solve sub-problem $\PP_{s+1}$ over $\B_{s+1}$, which is much more efficient than training of the full model. Thus, the full gradients at the $s$-th iteration in our algorithm is only computed with $p_{s}$ coefficients, which is much less than $O(p)$ in practice. Second, by exploiting the sparsity of the dataset, we only conduct sparse gradient update and sparse proximal update, which is very efficient for large-scale real-world datasets. Third, by reducing the gradient variance, we can use a constant step size to improve the convergence and finally achieve a linear convergence rate for strongly convex function. Lastly, all the threads works asynchronously, which is very efficient and easy to parallelize. Hence, the computation and memory costs of large-scale training can be effectively reduced.

\subsection{DDSS on Distributed-Memory Architecture}

On distributed-memory architecture, our Dis-DDSS algorithm is summarized in Alg. \ref{algorithm:Dis-DDSSServer} and \ref{algorithm:Dis-DDSSWorker}. In Dis-DDSS, suppose we have one server node and $l$ local worker nodes where each worker stores $n_k$ samples. When the flag is True, in the outer loop of the server node, the server broadcasts the flag and $x^{0}_{\B_{s}}$ to the workers.  At the worker node, worker $k$ receives $x^{0}_{\B_{s}}$ from the server, computes the gradient over $n_k$ samples, and then sends it to the server node. With the gradients received from all the workers, the server node computes the full gradients and send them to the workers. By (\ref{eq:scaling}), the server node computes $y^s$ and then performs the elimination to obtain $\B_{s+1}$ and then send it to the workers. The worker node receives $ \nabla \F(x^{0}_{\B_{s}})$ and  $\B_{s+1}$ from the server. With $\B_{s+1}$, the worker updates  $A_{i\in n_k, \B_{s+1}}, x^{0}_{\B_{s+1}}$ and $ \nabla \F(x^{0}_{\B_{s+1}})$.

\begin{algorithm}[H]
\renewcommand{\algorithmicrequire}{\textbf{Input:}}
\renewcommand{\algorithmicensure}{\textbf{Output:}}
\caption{Dis-DDSS (Server Node)}
\begin{algorithmic}[1]
\FOR{$s=0$ {\bfseries to} $S-1$}
\STATE flag = True
\STATE Broadcast flag and $x^{0}_{\B_{s}}$ to all workers
\STATE Receive gradients from all workers 
\STATE $\nabla \F(x^{0}_{\B_{s}}) = \frac{1}{n}\sum\nolimits_{k=1}^{l} \nabla \F^k(x^{0}_{\B_{s}})$
\STATE Compute $y^s$ by (\ref{eq:scaling})
\STATE Update $\B_{s+1} \subseteq \B_{s}$ by (\ref{eq:screening})
\STATE Broadcast $\B_{s+1}$ and $\nabla \F(x^{0}_{\B_{s}})$ to all workers
\STATE flag = False
\STATE Broadcast flag to all workers
\FOR{$t=0$ {\bfseries to} $K-1$}
\STATE Receive $\delta^s_{t}$ from one worker
\STATE Update  $x^{t+1}_{\B_{s+1}} = x^{t}_{\B_{s+1}} + \delta^s_{t}$
\ENDFOR
\STATE $x_{\B_{s+1}} = x^{K}_{\B_{s+1}}, x^0_{\B_{s+1}} = x_{\B_{s+1}}$
\ENDFOR
\end{algorithmic}
\label{algorithm:Dis-DDSSServer}
\end{algorithm}

\begin{algorithm}[H]
\renewcommand{\algorithmicrequire}{\textbf{Input:}}
\renewcommand{\algorithmicensure}{\textbf{Output:}}
\caption{Dis-DDSS (Worker Node $k$) }
\begin{algorithmic}[1]
\IF{flag = True}
\STATE Receive  $x^{0}_{\B_{s}}$ from server
\STATE Compute and send gradient $ \nabla \F^k(x^{0}_{\B_{s}}) = \sum\nolimits_{i\in n_k} \nabla  f_i(a_{i,\B_{s}}^\top x^{0}_{\B_{s}})$
\STATE Receive $ \nabla \F(x^{0}_{\B_{s}})$ from server
\STATE Receive $ \B_{s+1}$ from server
\STATE Update $A_{i\in n_k, \B_{s+1}}, x^{0}_{\B_{s+1}}, \nabla \F(x^{0}_{\B_{s+1}})$
\ELSE
\STATE Receive  $x^{d(t)}_{\B_{s+1}}$ from server
\STATE Randomly sample $i$ from  $\{1,2,\ldots, n_k\}$
\STATE Compute  
$v_{t}^{s}=\nabla f_{i}(a_{i,\B_{s+1}}^\top x^{d(t)}_{\B_{s+1}}) - \nabla f_{i}(a_{i,\B_{s+1}}^\top x^{0}_{\B_{s+1}}) + D_{i, {\B_{s+1}}} \nabla \F(x^0_{\B_{s+1}})  $ 
\STATE Send $\delta_t = \operatorname{prox}_{\eta \lambda \phi_{i}}(x^{d(t)}_{\B_{s+1}}-\eta v^{s}_{t}) -x^{d(t)}_{\B_{s+1}} $  to server
\ENDIF
\end{algorithmic}
\label{algorithm:Dis-DDSSWorker}
\end{algorithm}

When the flag is False, the server broadcasts the flag  to the workers. At the workers, the algorithm optimizes over $\B_{s+1}$. Multiple workers update the parameter asynchronously. Specifically, each worker receives stale parameter $x^{d(t)}_{\B_{s+1}}$ from the server.  The worker first chooses sample $i$ from $\{1,2,\ldots,n_k\}$ and compute $v^{s}_{t}$ over $\B_{s+1}$ as (\ref{eq:vr_gradient}). Following the decoupling strategy, we compute the proximal step and the update at the worker node. Finally, the worker send it to server. In the inner loop of the server node, with the update information $\delta^s_{t}$ from workers,  $x^{t+1}_{\B_{s+1}}$ is updated by $\delta^s_{t}$ via only simple addition computation. 

First, similar to Alg. \ref{algorithm:Sha-DDSS}, our Alg.  \ref{algorithm:Dis-DDSSServer} and \ref{algorithm:Dis-DDSSWorker} is very computationally efficient by exploiting the sparsity of the model and the dataset and reducing the gradient variance. Second,
the time-consuming proximal step from the server is off-loaded to all the workers, which is very efficient and easy to parallelize. Third, by the eliminating and the sparse update, the communication cost between the server and the workers is also much less than the full updates. Overall, the computation, memory and communication costs could be effectively reduced during the training.

\section{THEORETICAL ANALYSIS}
In this section, we provide the rigorous theoretical analysis on the convergence and screening ability for DDSS on shared-memory architecture. Note our analysis can be easily extended to distributed-memory architecture. All the proof is provided in the appendix.

\subsection{Assumptions, Definitions, and Properties}

\begin{assumption} [Strong Convexity]
$\Omega(x)$ is convex and block separable. $\F(x)$ is $\mu$-strongly convex, i.e., $\forall x, x^{\prime} \in \Re^{p},
$ we have $\F(x^{\prime}) \geq \F(x)+\nabla \F(x)^{\top}(x^{\prime}-x)+\frac{\mu}{2}\|x^{\prime}-x\|^{2}.$
\label{assumption:strongconvexity}
\end{assumption}

\begin{assumption} [Lipschitz Smooth]
Each $\F_i(x)$ is differentiable and Lipschitz gradient continuous with $L$, i.e., $\exists L>0,$ such that $\forall x, x^{\prime} \in \Re^{p}$, we have $\left\|\nabla \F_i(x)-\nabla \F_i(x^{\prime})\right\|  \leq L\|x-x^{\prime}\|.$
\label{assumption:lipschitz}
\end{assumption}

\begin{assumption}[Bounded Overlapping]
There exists a  bound $\tau$ on the number of iterations that overlap. The bound $\tau$ means each writing at  iteration $t$ is guaranteed to successfully performed into the memory before iteration $t+\tau + 1$.
\label{assumption:overlapping}
\end{assumption}

\begin{remark}
Asm. \ref{assumption:strongconvexity} implies $\PP(x)$ is also $\mu$-strongly convex. Asm. \ref{assumption:lipschitz} implies that $\F(x)$ is also Lipschitz gradient continuous. We denote $\kappa := L/\mu$ as the condition number. Asm. \ref{assumption:overlapping} means the delay that asynchrony
may cause is upper bounded. All the assumptions are commonly seen in  asynchronous methods \cite{pedregosa2017breaking}.  
\end{remark}

\begin{figure*}[t]
\centering
\subfigure[KDD2010]{\includegraphics[width=0.28\textwidth]{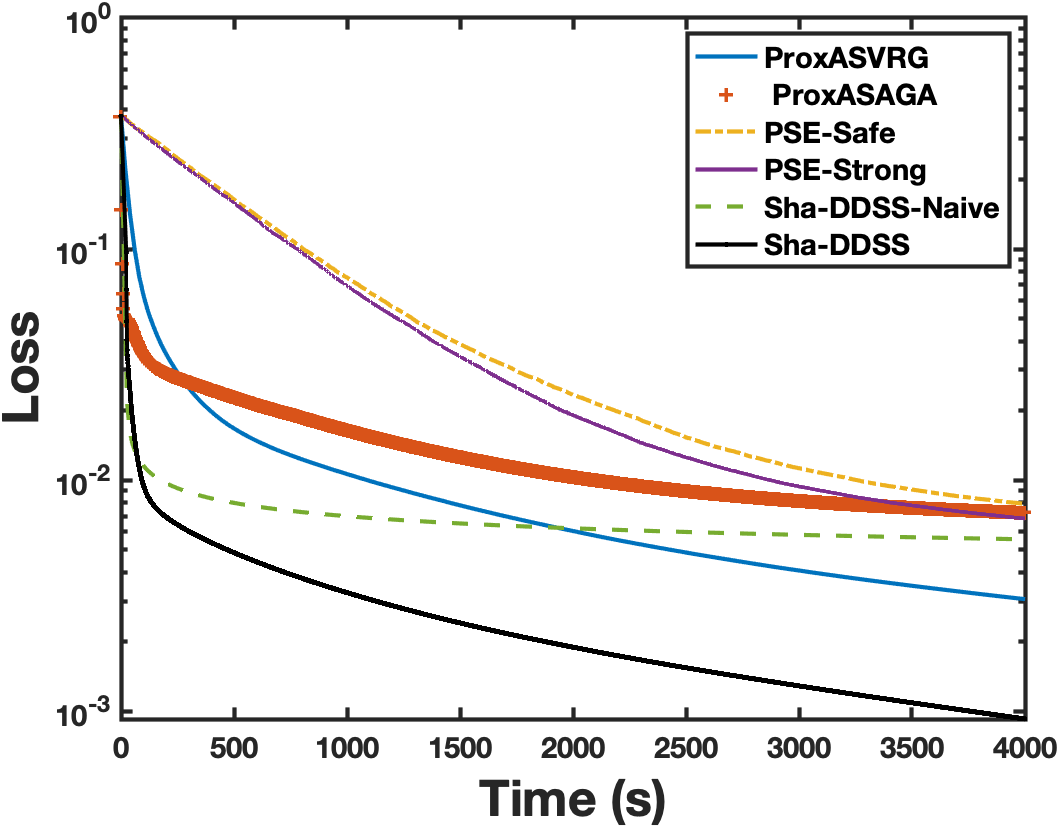}}
\hspace{9mm}
\subfigure[Avazu-app]{\includegraphics[width=0.28\textwidth]{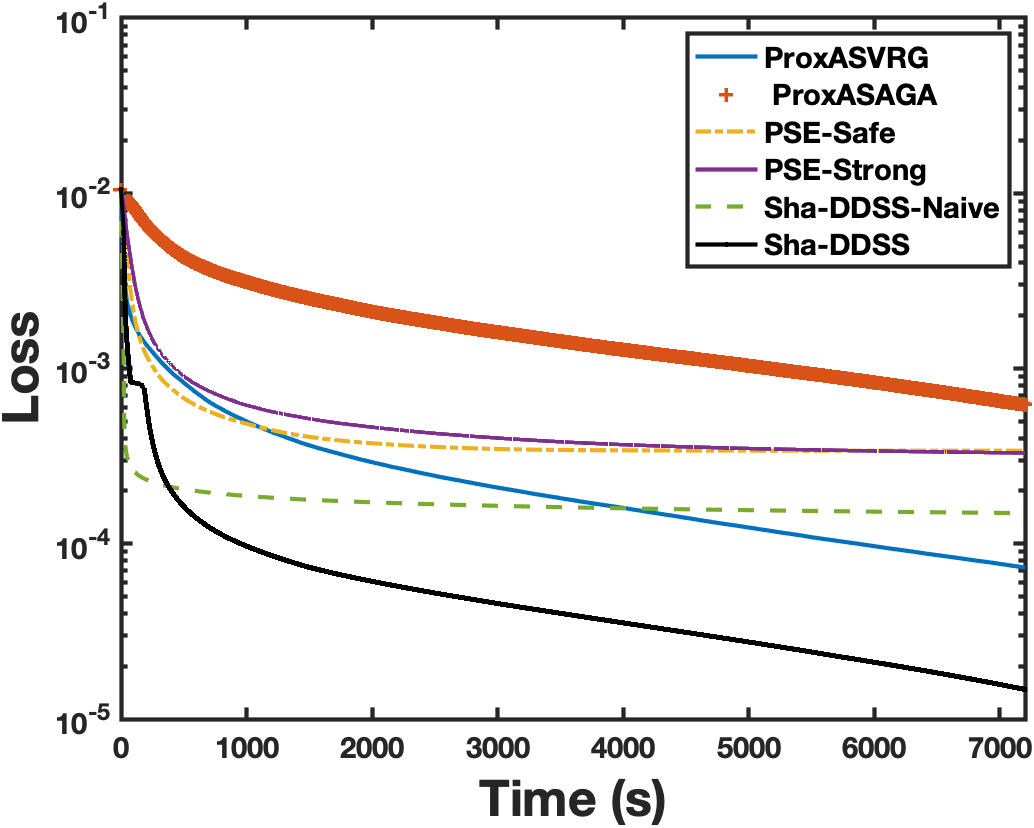}}
\hspace{9mm}
\subfigure[Avazu-site]{\includegraphics[width=0.28\textwidth]{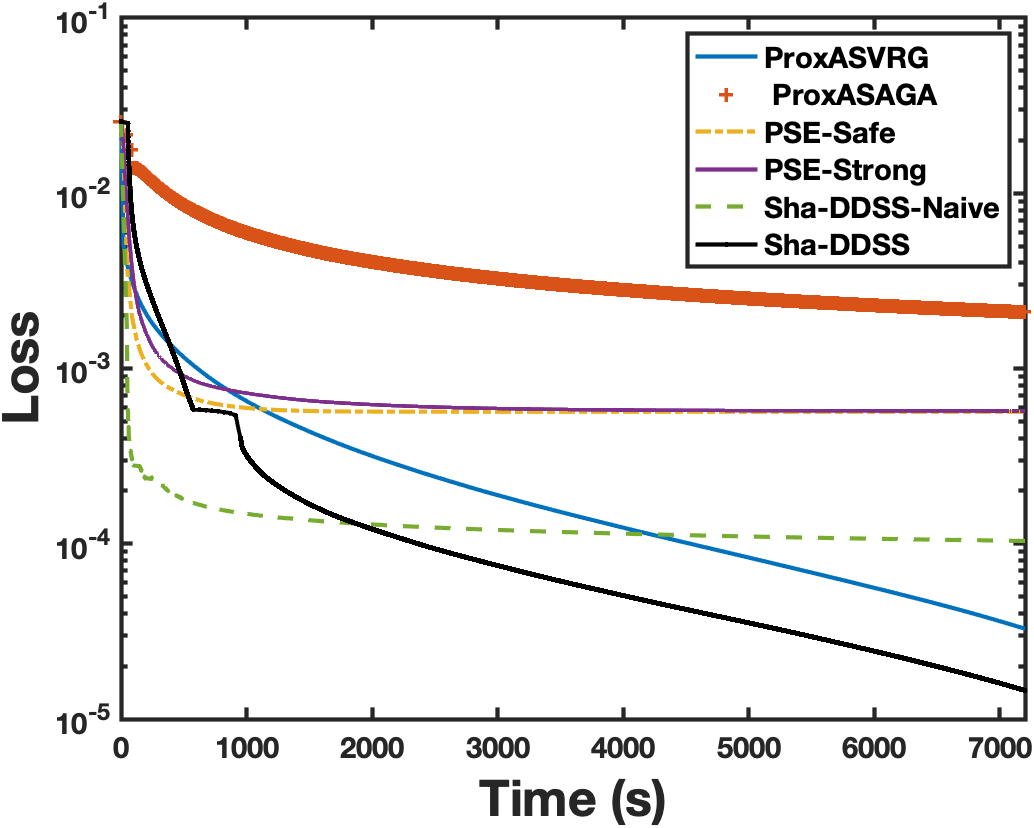}}
\caption{Convergence results  on shared-memory  architecture with $8$ threads.}
\label{fig1}
\end{figure*}

\begin{figure*}[t]
\centering
\subfigure[KDD2010]{\includegraphics[width=0.28\textwidth]{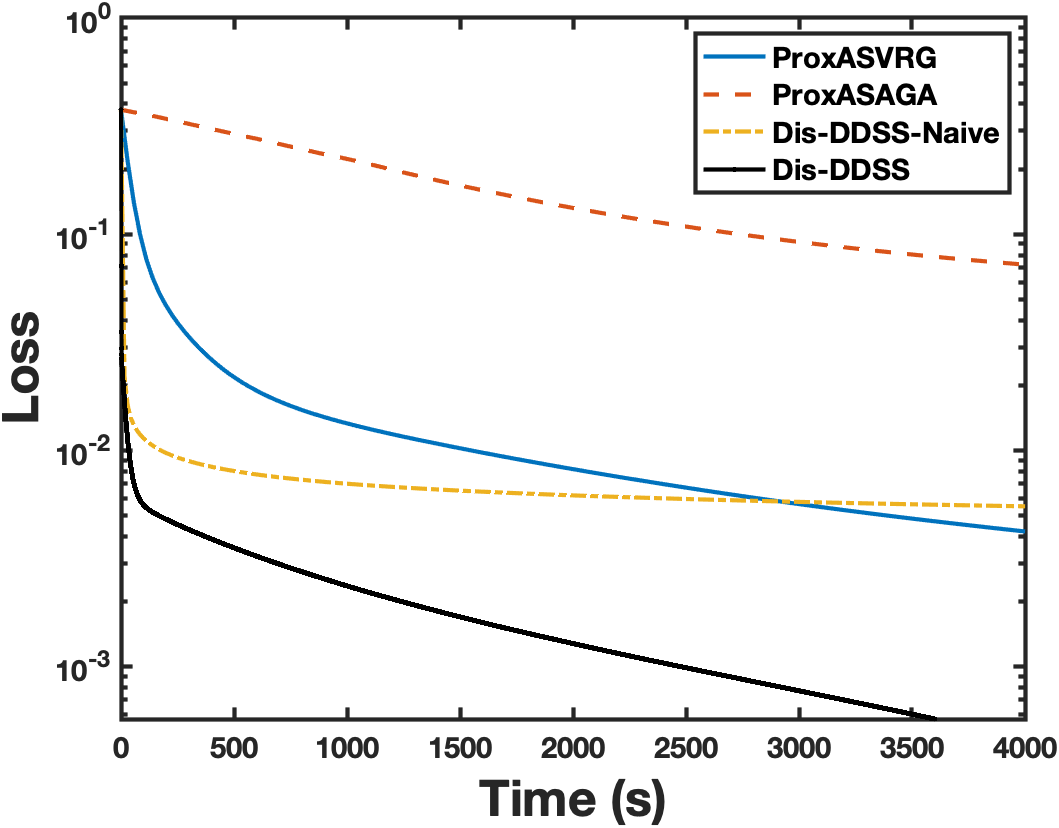}}
\hspace{9mm}
\subfigure[Avazu-app]{\includegraphics[width=0.28\textwidth]{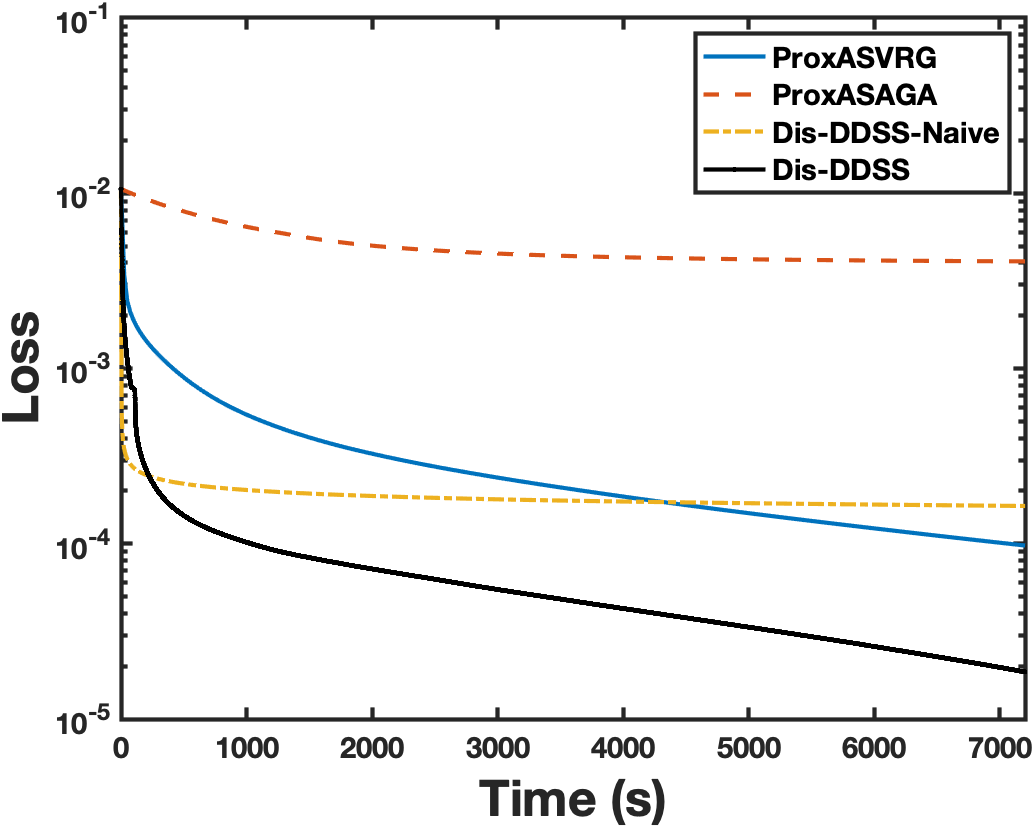}}
\hspace{9mm}
\subfigure[Avazu-site]{\includegraphics[width=0.28\textwidth]{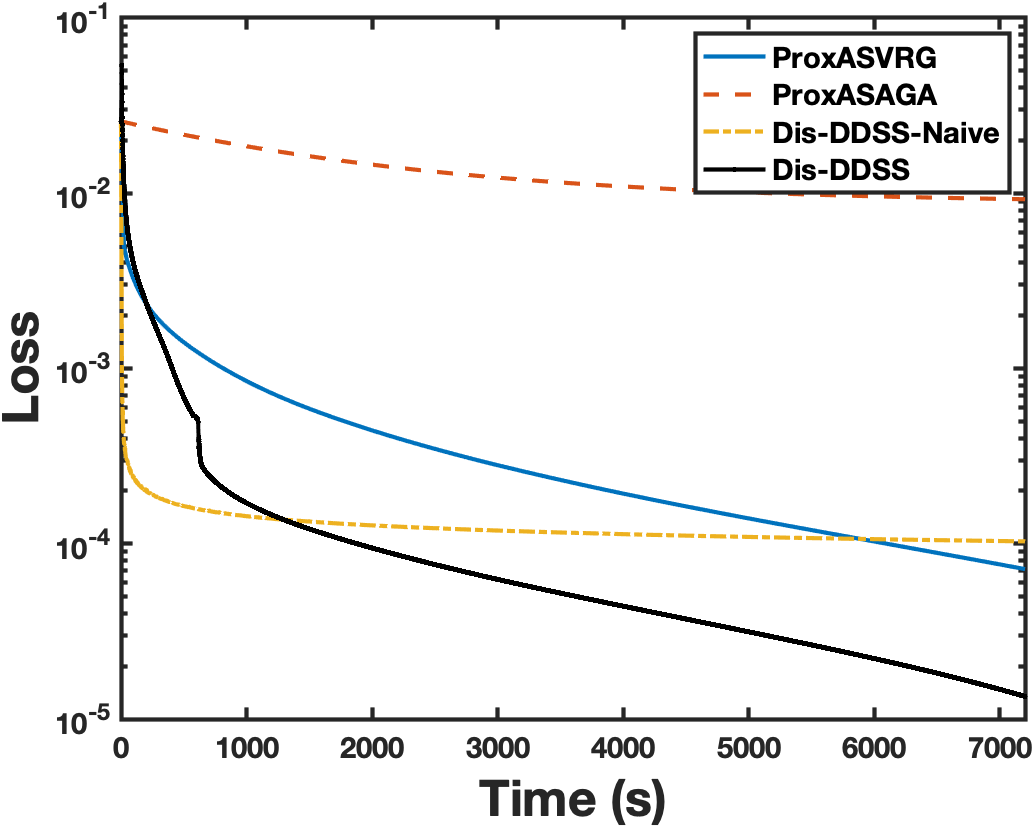}}
\caption{Convergence results  on distributed-memory  architecture with $8$ workers.}
\label{fig2}
\end{figure*}

\begin{figure*}[t]
\centering
\subfigure[KDD2010]{\includegraphics[width=0.29\textwidth]{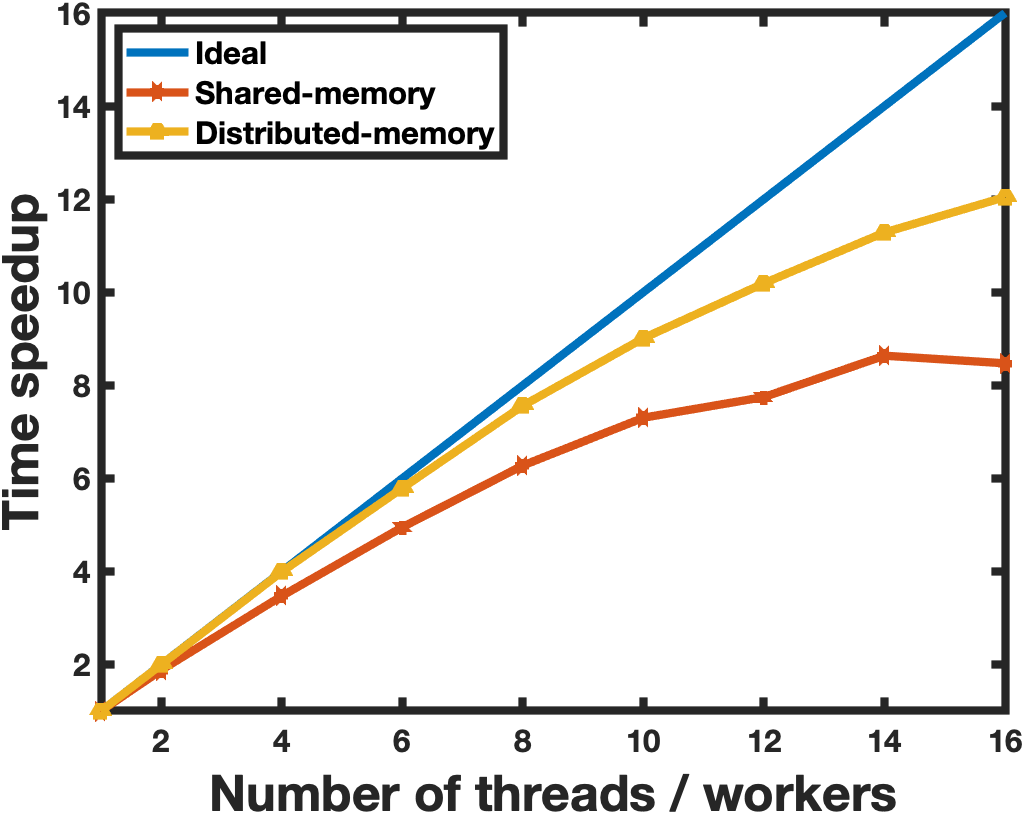}}
\hspace{9mm}
\subfigure[Avazu-app]{\includegraphics[width=0.29\textwidth]{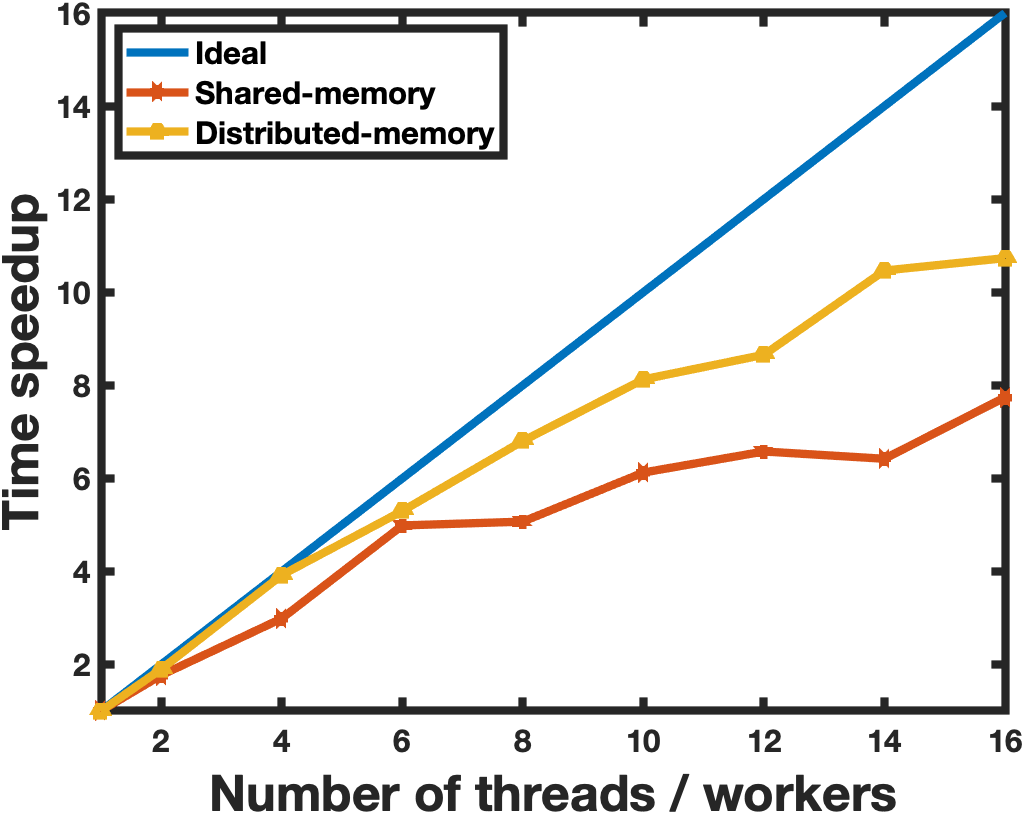}}
\hspace{9mm}
\subfigure[Avazu-site]{\includegraphics[width=0.29\textwidth]{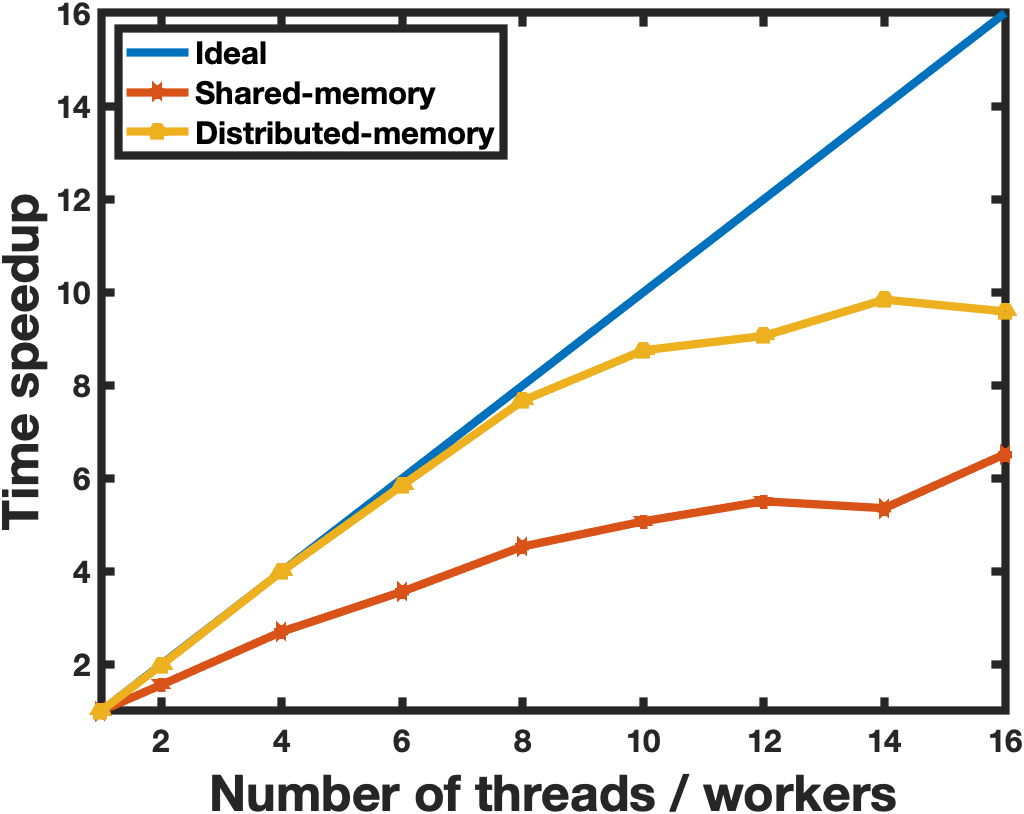}}
\caption{Convergence results  on shared-memory  architecture with $8$ threads.}
\label{fig3}
\end{figure*}

\begin{definition} [Block Sparsity]
We denote that the maximum frequency of occurrences $\Delta$ that a feature block belongs to the extended support, which can be formally defined as $\Delta = \max_{\G \in \B} |\{i: \Psi_i \ni \G \}| / n $. It is easy to verify that  $ 1/n \leq \Delta \leq 1$. 
\end{definition}

\begin{property} [Independence]
We use the “after read” labeling in \cite{leblond2017asaga}, which means we update the iterate counter after each thread fully reads the parameters. This means that $\hat{x}^{t}_{\B_{s+1}}$
is the $(t + 1)$-th fully completed read. Given the “after read”
global time counter, sample $i_r$ is independent of $\hat{x}^{t}_{\B_{s+1}}, \forall r\geq t$. 
\label{property:independence}
\end{property}

\begin{property} [Unbiased Gradient Estimation]
Gradient $v_t$ is an unbiased estimation of the gradient over set $\B_{s+1}$ at $\hat{x}^{t}_{\B_{s+1}}$, which is directly derived from Property \ref{property:independence}.
\end{property}

\begin{property}[Atomic Operation]
The update $x_{\B_{s+1}}^{t+1} = x_{\B_{s+1}}^{t} + \delta^s_{t}$ to shared-memory in Alg. \ref{algorithm:Sha-DDSS} is coordinate-wise atomic, which can address the overwriting problem caused by other threads. 
\end{property}

\subsection{Theoretical Results}

\begin{theorem} [Convergence]
Suppose $\tau \le \frac{1}{10\sqrt{\Delta}}$, let step size $\eta = \min \{ \frac{1}{24\kappa L}, \frac{\kappa}{2L}, \frac{\kappa}{10\tau L} \}$, inner loop size $K = \frac{4\log 3}{\eta\mu}$, we have
\begin{eqnarray}
    {\E} \left\|x_{\B_{S}}-x_{\B_{S}}^{*}\right\|^{2} \le (2/3)^{S} \left\| x_{{0}} - {x}^{*} \right\|^{2}.
\end{eqnarray}
\label{theorem:convergence}
\end{theorem}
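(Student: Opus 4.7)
The plan is to prove this by a two-level induction matching the nested loop structure of Algorithm~3. First, I would invoke the safety of the screening rule: any coordinate eliminated from $\B_s$ via (\ref{eq:screening}) is provably zero at the true optimum $x^*$, so the restricted optimum $x^*_{\B_{s+1}}$ of sub-problem $\PP_{s+1}$ coincides with the restriction of $x^*$ to $\B_{s+1}$, and the eliminated coordinates contribute nothing to $\|x_{\B_{s+1}} - x^*_{\B_{s+1}}\|^2$. This lets me treat the inner loop on $\B_{s+1}$ as optimizing a strongly convex composite problem with the same parameters $\mu, L, \Delta, \tau$, and chain the inner-loop contraction across outer iterations without inflating the error.

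For a single outer iteration, the goal is to show that the $K$ asynchronous inner updates contract $\E\|x^0_{\B_{s+1}} - x^*_{\B_{s+1}}\|^2$ by $2/3$. I would follow a Prox-SVRG / ProxASAGA style Lyapunov argument, aiming for a per-step inequality of the form
\begin{equation*}
\E \|x^{t+1} - x^*\|^2 \le (1 - c_1\eta\mu)\, \E \|x^{t} - x^*\|^2 + c_2\,\eta^2\, \E\bigl[\PP(\hat{x}^{t}) - \PP(x^*)\bigr],
\end{equation*}
obtained by expanding the prox step, applying non-expansiveness of $\operatorname{prox}_{\eta\lambda\phi_i}$ (using unbiasedness $\E\phi_i = \Omega$), invoking strong convexity for the cross term with $\nabla\F(x^*)$, and bounding three separate variance contributions. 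The first is the standard SVRG bound on $\E\|\nabla f_i(\hat{x}^t) - \nabla f_i(x^0) + \nabla\F(x^0) - \nabla\F(\hat{x}^t)\|^2$ via Lipschitz smoothness. The second comes from the random sparsification with $D_i$ in (\ref{eq:vr_gradient}): since $\E[D_i]$ equals the identity on the extended support, the additional variance scales with the block-sparsity $\Delta$. The third comes from the asynchronous delay $\hat{x}^t \ne x^t$; under Assumption~\ref{assumption:overlapping} and Property~\ref{property:independence}, telescoping over the $\tau$ most recent updates yields $\E\|\hat{x}^t - x^t\|^2 \lesssim \eta^2\tau$ times an average of recent $\|v^s_{t'}\|^2$.

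Combining the three bounds produces a cross term of order $\eta^2\tau^2\Delta$, which is why the hypothesis $\tau \le 1/(10\sqrt{\Delta})$ together with the step-size cap $\kappa/(10\tau L)$ is needed: each of the three terms in $\eta = \min\{1/(24\kappa L),\kappa/(2L),\kappa/(10\tau L)\}$ is calibrated to kill a distinct dangerous coefficient (variance, smoothness, delay). Under this $\eta$, the Lyapunov recursion holds with $c_1 = 1/4$ and a bounded $c_2$. Summing over $t=0,\ldots,K-1$, controlling the residual $\E[\PP(\hat{x}^t) - \PP(x^*)]$ by strong convexity, and using $(1-\eta\mu/4)^{K} \le \exp(-K\eta\mu/4) = \exp(-\log 3) = 1/3$ with $K = 4\log 3/(\eta\mu)$ yields the single-epoch contraction $\E\|x_{\B_{s+1}} - x^*_{\B_{s+1}}\|^2 \le (2/3)\,\E\|x^0_{\B_{s+1}} - x^*_{\B_{s+1}}\|^2$.

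Telescoping this contraction over $s = 0,\ldots,S-1$, and invoking the screening-safety observation from the first paragraph so that the eliminated coordinates do not add error across outer iterations, gives the stated bound $(2/3)^S\|x_0 - x^*\|^2$. The main obstacle will be the simultaneous bookkeeping for asynchrony and sparsification in the inner-loop recursion: although each variance source (SVRG, ASAGA-style delay, Prox-SAGA-style reweighting) has a known analysis, the parameter $\Delta$ enters through both $D_i$ and through the delay bound, so one must carefully verify that the cross terms admit a joint bound that is dominated by $\eta\mu$ precisely under the three-way step-size cap and the condition $\tau \le 1/(10\sqrt{\Delta})$.
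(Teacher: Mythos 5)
Your proposal follows essentially the same route as the paper's proof: the screening-safety observation that eliminated coordinates are zero at the optimum (so the sub-problem optima coincide and the outer contractions chain), a ProxASAGA/Prox-SVRG-style per-step Lyapunov recursion on $\E\|x^t_{\B_s}-x^*_{\B_s}\|^2$ handling the SVRG anchor variance, the $D_i$ reweighting (via $\E D_i = I$), and the $\tau$-delayed iterates, and finally the choice $K=4\log 3/(\eta\mu)$ to get a per-epoch factor of $2/3$. The only cosmetic difference is that you carry the variance residual as a function-value gap $\PP(\hat{x}^t)-\PP(x^*)$, whereas the paper keeps it as a Bregman divergence $B_{\F}(\hat{x}^t,x^*)$ that is absorbed by the negative strong-convexity term under $4L\eta\le 1$; this does not change the argument.
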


\begin{remark}
Theorem \ref{theorem:convergence} shows that DDSS can achieve a linear convergence rate $\log(1/\epsilon)$. 
\end{remark}

\begin{remark}
For the case $\F(x)$ is nonstrongly convex, we can slightly modify $\Omega(x)$ by adding a small perturbation, e.g., $\mu_f\|x\|^2$ for smoothing where $\mu_f$ is a positive parameter. We can treat $\F(x) +\mu_f \|x\|^2$ as the data-fitting loss and then the loss is $\mu_f$-strongly convex. Denote $\kappa := L/\mu_f$, suppose $\tau \le \frac{1}{10\sqrt{\Delta}}$, let  $\eta = \min \{ \frac{1}{24\kappa L}, \frac{\kappa}{2L}, \frac{\kappa}{10\tau L} \}$,  $K = \frac{4\log 3}{\eta\mu_f}$, we have
${\E} \left\|x_{\B_{S}}-x_{\B_{S}}^{*}\right\|^{2} \le (2/3)^{S} \left\| x_{{0}} - {x}^{*} \right\|^{2}.$
Similarly, the overall computational cost is also very efficient. 
\end{remark}

\begin{theorem} [Screening Ability]
Equicorrelation set \cite{tibshirani2013lasso} is defined as 
$
\B^{*}:=\{j \in \{1,2,\ldots,q\}: \Omega_{j}^{D}(A_{j}^{\top} y^{*})=  n \lambda\}. $ Then, as DDSS converges, there exists an iteration number $S_{0} \in \mathbb{N}$, s.t. $\forall s \geq S_{0}$, any variable block $j \notin \B^{*}$ is eliminated by DDSS almost surely. 
\label{theorem:screening}
\end{theorem}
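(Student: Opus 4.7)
\begin{proof*}[Proof Proposal for Theorem \ref{theorem:screening}]
The plan is to show that as $s$ grows, the left-hand side of the screening test (\ref{eq:screening}) converges to $\Omega_j^D(A_j^\top y^*)$, which is strictly less than $n\lambda$ for every $j \notin \B^*$, so the test is eventually triggered. I break this into four steps.

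\textbf{Step 1 (Almost sure primal convergence).} Theorem \ref{theorem:convergence} gives geometric decay of $\E\|x_{\B_S}-x^*_{\B_S}\|^2$. Summing the geometric series shows $\sum_S \E\|x_{\B_S}-x^*_{\B_S}\|^2 < \infty$, hence by the Borel--Cantelli lemma (or the standard a.s.\ version of the monotone/dominated convergence argument used in stochastic optimization) we get $x^0_{\B_s}\to x^*$ almost surely on the full coordinate space, where entries outside the current active set are understood to be zero at the optimum.

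\textbf{Step 2 (Dual convergence and vanishing gap).} The map $x\mapsto -\nabla\F(x)/\max(1,\Omega^D(A^\top\nabla\F(x))/\lambda)$ defining $y^s$ in (\ref{eq:scaling}) is continuous, because $\nabla\F$ is Lipschitz by Assumption \ref{assumption:lipschitz} and $\Omega^D$ is a norm hence continuous. Therefore $y^s\to y^*$ almost surely. By continuity of $\PP$ and of the dual objective $D$ in (\ref{eq:dual}), together with strong duality under Assumptions \ref{assumption:strongconvexity}--\ref{assumption:lipschitz}, we obtain $\PP(x^0_{\B_s})-D(y^s)\to \PP(x^*)-D(y^*)=0$ almost surely.

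\textbf{Step 3 (Triggering the test).} Fix any block $j\notin\B^*$, so that $\Omega_j^D(A_j^\top y^*)<n\lambda$. Set
\begin{equation*}
\varepsilon_j := n\lambda - \Omega_j^D(A_j^\top y^*) \;>\; 0.
\end{equation*}
Using continuity of $\Omega_j^D$ and boundedness of $A_j$, Step 2 gives random indices $S_j$ such that for all $s\ge S_j$, almost surely,
\begin{equation*}
\bigl|\Omega_j^D(A_j^\top y^s)-\Omega_j^D(A_j^\top y^*)\bigr|<\tfrac{\varepsilon_j}{2},
\quad
\Omega_j^D(A_j)\sqrt{2L(\PP(x^0_{\B_s})-D(y^s))}<\tfrac{\varepsilon_j}{2}.
\end{equation*}
Adding these bounds shows that the left-hand side of (\ref{eq:screening}) is strictly less than $n\lambda$, so block $j$ is screened at iteration $s$. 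Taking $S_0=\max_{j\notin\B^*}S_j$ (a finite max over finitely many blocks) finishes the argument, and once a block is eliminated it stays eliminated because $\B_{s+1}\subseteq\B_s$ in Algorithms \ref{algorithm:Sha-DDSS-Naive}--\ref{algorithm:Dis-DDSSServer}.

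The main obstacle is Step 2, specifically upgrading the mean-square convergence from Theorem \ref{theorem:convergence} to the almost sure convergence needed to make the per-sample-path continuity argument in Step 3 work, and then tracking that both $\Omega_j^D(A_j^\top y^s)$ and the duality gap $\PP(x^0_{\B_s})-D(y^s)$ are continuous functionals of $x^0_{\B_s}$ even though the active set $\B_s$ is itself changing with $s$. The sub-matrices $A_j$ and the zero-padding convention make this book-keeping essentially cosmetic, but it needs to be stated cleanly before invoking continuity.
\end{proof*}
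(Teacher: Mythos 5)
Your proposal follows essentially the same route as the paper's proof: drive $y^s\to y^*$ and the duality gap to zero, then use the strict slack $n\lambda-\Omega_j^D(A_j^\top y^*)>0$ for $j\notin\B^*$ together with a triangle-inequality/continuity bound on $\Omega_j^D(A_j^\top y^s)$ to trigger the screening test (\ref{eq:screening}) for all large $s$, with your $\varepsilon_j/2$ split being equivalent to the paper's threshold $\epsilon<\bigl(n\lambda-\Omega_j^D(A_j^\top y^*)\bigr)/\bigl(2\Omega_j^D(A_j)\bigr)$. You are in fact more explicit than the paper on the one delicate point — upgrading the mean-square convergence of Theorem \ref{theorem:convergence} to almost sure convergence via Borel--Cantelli — which the paper simply asserts.
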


\begin{remark}
Suppose the size of set $\B_{s}$ is $p_s$ and $p^*$ is the size of the active features in $\B^*$, Theorem \ref{theorem:screening} shows we have $p_s$ is decreasing and $\lim_{s \rightarrow+\infty} p_s = p^*$.  
\end{remark}

\begin{remark}
To sum it up, by the elimination, the cost at each iteration is reduced from $O(p)$ to $O(p_s)$. Moreover, by the sparse update, only the nonzero coefficients of set $\B_{s}$ is updated and thus the cost at each iteration is further reduced from $O(p_s)$ to $O(p'_s)$ where $p'_s$ is the size of the nonzero coefficients. In the high-dimensional setting, we have $p^*\ll p$, $p_s \ll p$, and $p'_s \ll p$. Thus, with constantly decreasing $p_s$ and the sparse update, our DDSS can reduced the complexity from $O(p)$ to $O(r)$ where $r$ is the mean of $p'_s$ for $s = 1, 2, \ldots$, which can accelerate the training at a large extent in practice. 
\end{remark}

\section{EXPERIMENTS}

\subsection{Experimental Setup}
We compare our method with other competitive methods on three large-scale datasets. Although DDSS can work more broadly, we focus on Lasso for sparse regression, which is the most popular case for feature screening. Specifically, Lasso solves 
\begin{eqnarray}
\underset{x \in \Re^{p}}{\min} \frac{1}{n} \sum_{i=1}^{n} \frac{1}{2}(y_{i}-a_{i}^\top x)^{2}+\lambda\|x\|_{1}.
\label{eq:lasso}
\end{eqnarray}

On shared-memory architecture, we compare six asynchronous methods: 1)  PSE-Strong: parallel strong screening in \cite{li2016parallel};  2)  PSE-Safe: parallel static safe screening in \cite{li2016parallel}; 3) ProxASAGA \cite{pedregosa2017breaking}; 4) ProxASVRG \cite{meng2017asynchronous}; 5) Sha-DDSS-Naive; 6) Our Sha-DDSS. PSE-Safe and PSE-Strong are parallel static screening. ProxASAGA and ProxASVRG are popular asynchronous method with linear convergence. 

On distributed-memory architecture, we compare four asynchronous methods: 1) ProxASAGA \cite{leblond2018improved}; 2) ProxASVRG  \cite{meng2017asynchronous}; 3) Dis-DDSS-Naive; 4) Our Dis-DDSS. 

\begin{table}[h!]
\centering
\caption{Real-world datasets in the experiments.}
\setlength{\tabcolsep}{3.0mm}
\begin{tabular}{cccc}
\toprule
Dataset & Sample Size & Attributes & Sparsity   \\
\midrule
    KDD 2010 & 19,264,097 & 1,163,024  & $7 \times 10^{-6}$   \\
    Avazu-app   & 14,596,137       & 1,000,000  & $10^{-5}$\\
   Avazu-site    & 25,832,830 & 1,000,000  & $10^{-5}$ \\
\bottomrule
	\end{tabular}
\label{table:datasets}
\end{table}

We use three large-scale real-world benchmark datasets described in Table \ref{table:datasets}. All the datasets are from LIBSVM \cite{chang2011libsvm}, which can be found at \url{https://www.csie.ntu.edu.tw/~cjlin/libsvmtools/datasets/}. 

We implement all the compared methods in C++. We employ OpenMP and OpenMPI as the parallel framework for shared-memory and distributed-memory architecture respectively.  We run all the methods on 2.10 GHz Intel(R) Xeon(R) CPU machines. For the implementation, the inner loop size, ranging from $2 \times 10^{3}$ to $2 \times 10^{6}$, and the step size, ranging from $10^{-11}$ to $10^{-13}$,  of each method are chosen to obtain the best performance.  Parameter $\lambda$ is set as $4*10^{-6} \lambda_{max}$, $2*10^{-3} \lambda_{max}$, and $1*10^{-3} \lambda_{max}$ for KDD 2010, Avazu-app, and Avazu-site dataset respectively where $\lambda_{\max}$ is a parameter that, for all $\lambda \geq \lambda_{\max}$,  $x^*$  must be $0$.

\subsection{Experimental Results and Discussions}

\paragraph{Convergence Results}
Figure \ref{fig1} (a)-(c) provides the convergence results of different methods on shared-memory architecture with $8$ threads  on three datasets respectively. Our Sha-DDSS-Naive method converges very fast at the initial stage because of the screening ability and slows down later due to its sublinear convergence rate.  The results confirm that our Sha-DDSS method always converge much faster than other methods on shared-memory architecture. Figure \ref{fig2} (a)-(c) provides the convergence results of different methods on distributed-memory architecture with $8$ workers  on three datasets respectively. The results also confirm that our Dis-DDSS always converge much faster than other methods on distributed-memory architecture. 

This is because our method on both shared-memory and distributed-memory architecture can eliminate the features by exploiting the sparsity of the model, perform efficient sparse update by  exploiting the sparsity of the dataset, achieve the linear convergence rate by reducing the gradient variance. Our Dis-DDSS also performs the decouple proximal update to reduce the workload of the server and reduces the communication costs.

\paragraph{Linear Speedup Property}
We evaluate Sha-DDSS with different number of threads on shared-memory architecture and Dis-DDSS with different number of workers on distributed-memory architecture. Figure \ref{fig3}(a)-(c) presents the results of the speedup of  Sha-DDSS and Dis-DDSS on three datasets respectively. The results show that our method can successfully achieve a nearly linear speedup when we increase the number of threads or workers, although the performance decreases when the number of processors or works increases. This is because there are overheads for creating threads and distributing work for OpenMP and communication costs for OpenMPI, which the theoretical analysis does not take into account.

\section{CONCLUSION}
In this paper, we propose the first distributed dynamic safe screening method for sparse models and apply it on shared-memory and distributed-memory architecture respectively. Theoretically, we prove that our proposed method can achieve a linear convergence rate with lower overall complexity.  Moreover, we prove that our method can eliminate almost all the inactive variables in a finite number of iterations almost surely.
Finally, extensive experimental results on benchmark datasets confirm the significant acceleration
and linear speedup property of our method.

\newpage
\newpage

\bibliographystyle{named}
\bibliography{ijcai22}

\onecolumn

\appendix

\newpage

\section{\Large Appendix}

In the appendix, we provide some basic lemmas and the proof for all the theorems.

\subsection{Basic Lemmas}

\begin{lemma}
\label{lm1}
Suppose $\F$ is $\mu$-strongly convex, we have:
\begin{eqnarray} 
    \langle\nabla \F(y)-\nabla \F(x), y-x\rangle \geq \frac{\mu}{2}\|y-x\|^{2}+B_{\F}(x, y)
\end{eqnarray} 
where $B_{\F}(x, y)$  is the Bregman divergence  defined as $B_{\F}(x, y):=\F(x)-\F(y)-\langle\nabla \F(y), x-y\rangle.$
\end{lemma}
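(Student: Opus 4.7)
The plan is to reduce the claimed inequality to a standard consequence of $\mu$-strong convexity by using the three-point (or ``symmetrization'') identity for the Bregman divergence. Concretely, I would split $\langle \nabla \F(y) - \nabla \F(x), y - x\rangle$ into a sum of two Bregman terms, one of which is exactly $B_\F(x,y)$ (the quantity appearing on the right), so that the inequality collapses to a bound on the other.

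First, I would verify by direct expansion the algebraic identity
\begin{equation*}
\langle \nabla \F(y) - \nabla \F(x),\, y - x \rangle \;=\; B_{\F}(x, y) \;+\; B_{\F}(y, x).
\end{equation*}
Using the definition $B_\F(u,v) = \F(u) - \F(v) - \langle \nabla \F(v), u-v\rangle$ on both summands, the two occurrences of $\F(x)$ and the two occurrences of $\F(y)$ cancel, and what remains is $-\langle \nabla \F(y), x-y\rangle - \langle \nabla \F(x), y-x\rangle = \langle \nabla \F(y) - \nabla \F(x), y-x\rangle$. This is a routine calculation and forms the structural backbone of the argument.

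Second, I would invoke Assumption~\ref{assumption:strongconvexity} applied at the base point $x$ evaluated at $y$, which reads $\F(y) \geq \F(x) + \langle \nabla \F(x), y-x\rangle + \tfrac{\mu}{2}\|y-x\|^2$. Rearranging this yields exactly $B_\F(y, x) \geq \tfrac{\mu}{2}\|y-x\|^2$. Substituting into the identity from the previous step gives
\begin{equation*}
\langle \nabla \F(y) - \nabla \F(x),\, y-x \rangle \;=\; B_\F(x,y) + B_\F(y,x) \;\geq\; B_\F(x,y) + \tfrac{\mu}{2}\|y-x\|^2,
\end{equation*}
which is the stated inequality.

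The proof is very short and essentially ``bookkeeping''; the only potential pitfall is notational. One must track the asymmetric ordering of arguments in $B_\F$, since $B_\F(x,y) \neq B_\F(y,x)$ in general, and apply strong convexity in the correct direction (base point $x$, evaluation $y$) so that the Bregman term produced on the left of the identity is $B_\F(y,x)$ rather than $B_\F(x,y)$. No additional assumptions beyond $\mu$-strong convexity are needed; differentiability of $\F$ (from Assumption~\ref{assumption:lipschitz}) is used implicitly so that $\nabla \F$ is well-defined.
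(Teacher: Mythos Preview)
Your proposal is correct and amounts to the same argument as the paper's. The paper proceeds by a direct chain of rearrangements from the strong convexity inequality, subtracting $\langle \nabla \F(y), x-y\rangle$ from both sides to expose $B_\F(x,y)$, whereas you first isolate the symmetrization identity $\langle \nabla \F(y)-\nabla \F(x), y-x\rangle = B_\F(x,y)+B_\F(y,x)$ and then bound $B_\F(y,x)$; the underlying computation is identical.
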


\begin{proof} By strong convexity, for any $x, y$,  we have:
\begin{eqnarray}
&&    \F(y) \geq \F(x) + \langle \nabla \F(x),  y-x  \rangle + \frac{\mu}{2}\|y-x\|^2 \nonumber \\
&\iff& \langle \nabla \F(x), x-y \rangle \geq \frac{\mu}{2}\|y-x\|^2 + \F(x)-\F(y)   \nonumber \\
&\iff&  \langle \nabla \F(x) - \nabla \F(y), x-y  \rangle \geq \frac{\mu}{2}\|y-x\|^2 + \F(x) - \F(y) - \langle \nabla \F(y), x-y \rangle \nonumber \\
&\iff&  \langle \nabla \F(y) - \nabla \F(x), x-y  \rangle \geq \frac{\mu}{2}\|x-y\|^2 + B_{\F}(x, y).
\end{eqnarray}
This finishes the proof.
\end{proof}

\begin{lemma}
\label{lm2}
Suppose $\F_i$ is $L$-smooth and convex, we have:
\begin{eqnarray} 
    \frac{1}{n} \sum_{i=1}^{n}\left\|\nabla \F_i(x)-\nabla \F_i(y)\right\|^{2} \leq 2 L B_{\F}(x, y).
\end{eqnarray} 

\end{lemma}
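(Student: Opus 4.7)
The plan is to prove the pointwise (per-summand) inequality $\|\nabla \F_i(x) - \nabla \F_i(y)\|^2 \le 2 L\, B_{\F_i}(x,y)$ for every $i$, and then average over $i$, using the fact that the Bregman divergence is linear in the generating function: since $\F = \frac{1}{n}\sum_i \F_i$, we have $B_{\F}(x,y) = \frac{1}{n}\sum_i B_{\F_i}(x,y)$ directly from the definition $B_g(x,y) := g(x) - g(y) - \langle \nabla g(y), x-y\rangle$. Thus the lemma reduces to the classical co-coercivity bound for a single $L$-smooth convex function.

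To establish the per-summand inequality, I would use the standard auxiliary-function trick. Fix $y$ and define $g_i(z) := \F_i(z) - \langle \nabla \F_i(y), z\rangle$. Then $g_i$ is convex (as the sum of a convex and a linear function), $L$-smooth (since the linear term has zero Hessian), and attains its global minimum at $z = y$ because $\nabla g_i(y) = 0$. By the descent lemma applied at the point $x$ stepping toward its gradient, $g_i(x - \tfrac{1}{L}\nabla g_i(x)) \le g_i(x) - \tfrac{1}{2L}\|\nabla g_i(x)\|^2$. Combining this with $g_i(y) \le g_i(x - \tfrac{1}{L}\nabla g_i(x))$ gives $g_i(x) - g_i(y) \ge \tfrac{1}{2L}\|\nabla g_i(x)\|^2$. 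Unpacking the definition of $g_i$, the left-hand side is exactly $B_{\F_i}(x,y)$ and $\nabla g_i(x) = \nabla \F_i(x) - \nabla \F_i(y)$, so we obtain $\|\nabla \F_i(x) - \nabla \F_i(y)\|^2 \le 2L\, B_{\F_i}(x,y)$.

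Finally, averaging over $i = 1, \dots, n$ and using linearity of the Bregman divergence gives
\begin{equation*}
\frac{1}{n}\sum_{i=1}^{n}\|\nabla \F_i(x) - \nabla \F_i(y)\|^2 \le \frac{2L}{n}\sum_{i=1}^{n} B_{\F_i}(x,y) = 2L\, B_{\F}(x,y),
\end{equation*}
which is the desired conclusion. The only nontrivial step is the co-coercivity bound for a single summand; the main obstacle (if any) is being careful that convexity and smoothness of $\F_i$ together are what license the auxiliary-function argument, and that Assumption~\ref{assumption:lipschitz} already supplies $L$-smoothness of each $\F_i$ while convexity of the data-fitting loss is standard and implicit (e.g., it follows from $\mu$-strong convexity of $\F$ for each coordinate-aligned $\F_i$ in the generalized linear model setting considered here). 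Everything else is routine bookkeeping.
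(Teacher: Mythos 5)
Your proof is correct and follows essentially the same route as the paper: establish the per-summand co-coercivity bound $\|\nabla \F_i(x)-\nabla \F_i(y)\|^2 \leq 2L\,B_{\F_i}(x,y)$ and then average over $i$ using linearity of the Bregman divergence in its generating function. The only difference is that the paper simply cites this per-summand inequality from an external reference, whereas you derive it from scratch via the standard auxiliary-function/descent-lemma argument, which makes your version more self-contained but not a different proof.
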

\begin{proof}
Based on Eq. $5$ of Lemma 4 in \cite{zhou2018fenchel}, we have
\begin{eqnarray} 
\|\nabla \F_i(x) - \nabla \F_i(y)\|^2 \leq 2 L \big(\F_i(x) - \F_i(y) - \langle \nabla \F_i(y), x - y\rangle\big)\,.
\end{eqnarray} 
Averaging with $i$, we have 
\begin{eqnarray} 
 \frac{1}{n} \sum_{i=1}^{n} \|\nabla \F_i(x) - \nabla \F_i(y)\|^2 \leq 2 L \big(\F(x) - \F(y) - \langle \nabla \F(y), x - y\rangle\big)
 \end{eqnarray} 
which is equivalent to
\begin{eqnarray} 
\langle \nabla \F(x) - \nabla \F(y), x - y \rangle \geq \frac{\mu}{2}\|x - y\|^2 + B_{\F}(x, y) .
\end{eqnarray} 
This finishes the proof.
\end{proof}

\begin{lemma}
\label{lm3}
Let $x^*$ be the optimal solution of Problem (\ref{eq:general}), $v_t^s$ is the sparse variance reduced gradient with sample $i$ defined in (\ref{eq:vr_gradient}), $g$ is the the sparse gradient mapping for $v_t^s$ and computed as $ g = \frac{1}{\eta}(x - \operatorname{prox}_{\eta \lambda \phi_{i}}\left(x - \eta v_t^s \right) )$. Then, for any $\beta > 0$ and $x \in \Re^{p}$, we have:
\begin{eqnarray} 
\left\langle{g}, {x}-{x}^{*}\right\rangle \geq-\frac{\eta}{2}(\beta-2)\|{g}\|^{2}-\frac{\eta}{2 \beta}\left\| v_t^s -{D}_{i} \nabla \F\left({x}^{*}\right)\right\|^{2}+\left\langle v_t^s -{D}_{i} \nabla \F\left({x}^{*}\right), {x}-{x}^{*}\right\rangle.
\end{eqnarray} 
\end{lemma}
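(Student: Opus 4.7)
The plan is to prove this gradient-mapping inequality by combining the first-order optimality condition for the proximal operator, the optimality condition for $x^*$, convexity of $\phi_i$, and Young's inequality. Throughout, I will abbreviate $e := v_t^s - D_i \nabla \F(x^*)$, so that the conclusion becomes $\langle g, x-x^*\rangle - \langle e, x-x^*\rangle \geq -\tfrac{\eta}{2}(\beta-2)\|g\|^2 - \tfrac{\eta}{2\beta}\|e\|^2$.

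First I would unwind the definition of $g$: since $x - \eta g = \operatorname{prox}_{\eta\lambda\phi_i}(x - \eta v_t^s)$, the first-order optimality of the prox yields $(g - v_t^s)/\lambda \in \partial \phi_i(x - \eta g)$. Next I need the companion inclusion at the optimum, namely $-D_i \nabla \F(x^*)/\lambda \in \partial \phi_i(x^*)$; this is the one nontrivial step. It follows from the optimality condition of Problem (\ref{eq:general}), $-\nabla \F(x^*) \in \lambda \partial \Omega(x^*)$, by using that $\Omega = \sum_\G \Omega_\G$ is block separable while $\phi_i(x) = \sum_{\G\in\Psi_i} d_\G \Omega_\G(x_\G)$. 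For each block $\G \in \Psi_i$, scaling the block-wise subgradient inclusion by $d_\G$ gives $-d_\G \nabla_\G \F(x^*) \in \lambda \partial_\G \phi_i(x^*)$, which is precisely $[D_i \nabla \F(x^*)]_\G$; for blocks $\G \notin \Psi_i$, $\phi_i$ is independent of $x_\G$ so $\partial_\G \phi_i(x^*)=\{0\}$, matching $[D_i \nabla \F(x^*)]_\G = 0$ by definition of $D_i$. Assembling both inclusions gives $-D_i \nabla \F(x^*)/\lambda \in \partial \phi_i(x^*)$.

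With both subgradient inclusions in hand, I would apply convexity of $\phi_i$ in two directions: evaluate the convexity inequality at $(x^*, x-\eta g)$ using $(g-v_t^s)/\lambda$, and at $(x-\eta g, x^*)$ using $-D_i \nabla \F(x^*)/\lambda$. Adding the two inequalities cancels the $\phi_i$ terms, multiplying through by $\lambda$, and rearranging produces the key monotonicity estimate $\langle g - e, (x-\eta g) - x^*\rangle \geq 0$.

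Finally, I would split $x - x^* = (x - \eta g - x^*) + \eta g$ to get $\langle g - e, x - x^*\rangle \geq \eta\|g\|^2 - \eta\langle e, g\rangle$, and absorb the cross term via Young's inequality $\eta\langle e, g\rangle \leq \tfrac{\eta\beta}{2}\|g\|^2 + \tfrac{\eta}{2\beta}\|e\|^2$ for any $\beta > 0$. Substituting back yields exactly the desired bound. The main obstacle is the subgradient translation step linking $\partial \Omega(x^*)$ to $\partial \phi_i(x^*)$ through the reweighting matrix $D_i$; once that identification is justified from block separability, the remainder is standard proximal-SVRG algebra.
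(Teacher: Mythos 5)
Your proposal is correct and follows essentially the same route as the paper: your monotonicity estimate $\langle g - e,\ (x-\eta g) - x^*\rangle \geq 0$ is exactly the paper's firm non-expansiveness inequality for $\operatorname{prox}_{\eta\lambda\phi_i}$ (rearranged), and the subsequent split of $x - x^*$ plus Young's inequality matches the paper's algebra. The only difference is presentational — you derive the key inequality from subdifferential monotonicity and explicitly justify the inclusion $-D_i\nabla\F(x^*)/\lambda \in \partial\phi_i(x^*)$ via block separability, a step the paper simply asserts through the fixed-point identity $x^* = \operatorname{prox}_{\eta\lambda\phi_i}(x^* - \eta D\nabla\F(x^*))$.
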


\begin{proof}
Let $x^+ = \operatorname{prox}_{\eta \lambda \phi_i}(x - \eta v_t^s)$ and $x^* = \operatorname{prox}_{\eta \lambda \phi_i}(x^* - \eta D  \nabla \F(x^*))$, denote $\langle \cdot, \cdot \rangle_{(i)}$ as the inner product restricted to the blocks in $\Psi_i$ and   $\|\cdot\|_{(i)}$ as the norm restricted to the blocks in $\Psi_i$,  we have:
\begin{eqnarray} \label{eq:nonexpansive}
    \|x^+ - x^*\|_{(i)}^2 - \langle x^+ - x^*, x - \eta v_t^s - x^* + \eta D \nabla \F(x^*) \rangle_{(i)} \leq 0,
\end{eqnarray} 
which comes from the firm non-expansiveness of the proximal operator.

Then we have:
\begin{eqnarray} 
 \langle \eta g, x - x^*\rangle 
    &=& \langle x - x^+, x - x^* \rangle_{(i)} \nonumber\\
    &=& \|x - x^*\|_{(i)}^2 -  \langle x^+ - x^*, x - x^* \rangle_{(i)} \nonumber\\
    &\geq& \|x - x^*\|_{(i)}^2 - \langle x^+ - x^*, 2 x - \eta v_t^s- 2 x^* + \eta D \nabla \F(x^*) \rangle_{(i)} + \|x^+ - x^*\|^2_{(i)} 
 \nonumber\\
    &=& \|x - x^+\|_{(i)}^2 + \langle x^+ - x^*, \eta v_t^s -  \eta D \nabla \F(x^*) \rangle_{(i)} \nonumber\\
    &=& \|x - x^+\|_{(i)}^2 + \langle x - x^*, \eta v_t^s -  \eta D \nabla \F(x^*) \rangle_{(i)}- \langle x - x^+, \eta v_t^s-  \eta D \nabla \F(x^*) \rangle_{(i)}  \\
    &\geq& \Big(1 - \frac{\beta}{2}\Big)\|x - x^+\|_{(i)}^2 -  \frac{\eta^2}{2\beta}\| v_t^s -  D \nabla \F(x^*) \|_{(i)}^2 + \eta\langle  v_t^s -  D \nabla \F(x^*) , x - x^*\rangle_{(i)} \nonumber\\
    & =& \Big(1 - \frac{\beta}{2}\Big)\|x - x^+\|_{(i)}^2 -  \frac{\eta^2}{2\beta}\| v_t^s -  D_i \nabla \F(x^*) \|^2 + \eta\langle  v_t^s -  D_i \nabla \F(x^*) , x - x^*\rangle \nonumber\\
    &=& \Big(1 - \frac{\beta}{2}\Big)\|\eta  g\|^2 -  \frac{\eta^2}{2\beta}\| v_t^s -  D_i \nabla \F(x^*) \|^2 + \eta\langle  v_t^s -  D_i \nabla \F(x^*) , x - x^*\rangle \,,
\end{eqnarray} 
where the first inequality is obtained by adding Eq.~\eqref{eq:nonexpansive}, the second inequality is obtained by Young's inequality ${2 \langle a, b \rangle \leq \frac{\|a\|^2}{\beta} + \beta \|b\|^2}$ for arbitrary $\beta > 0$. Finally, the result finishes the proof.
\end{proof}

\begin{lemma}\cite[Proposition 1]{leblond2017asaga}\label{lm4}
For any $u \neq t$, we have
\begin{eqnarray} \label{sparseproduct}
\E |\langle g_{u}, g_t \rangle | &\leq \frac{\sqrt{\Delta}}{2}(\E\|g_{u}\|^2 + \E\|g_{t}\|^2)  \, .
\end{eqnarray} 
\end{lemma}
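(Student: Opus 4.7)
The plan is to exploit the support sparsity of the sparse gradient mappings $g_u,g_t$ together with the independence between the sample drawn at iteration $t$ and the iterate $\hat{x}^u$ for $u<t$ guaranteed by Property \ref{property:independence}. Without loss of generality I assume $u<t$; the reverse case is symmetric because the bound to be proved is symmetric in $u$ and $t$. The proof consists of three short steps: a support-restricted Cauchy--Schwarz, a weighted Young's inequality with a free parameter $\alpha$, and a balancing choice of $\alpha$ that splits the resulting single $\Delta$ factor into two $\sqrt{\Delta}$ pieces.

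The first step uses that, by the construction of the sparse estimator $v^s_r$, the mapping $g_r$ is supported on the blocks in $\Psi_{i_r}$; in particular $g_t^\G=0$ whenever $\G\notin\Psi_{i_t}$. Splitting the inner product block-wise and restricting the sum to blocks where $g_t^\G$ can be nonzero, followed by Cauchy--Schwarz on this restricted sum, gives
$$|\langle g_u,g_t\rangle|\;\le\;\sum_{\G\in\Psi_{i_t}}\|g_u^\G\|\,\|g_t^\G\|\;\le\;\|g_u\|_{(t)}\,\|g_t\|,$$
where $\|g_u\|_{(t)}^2:=\sum_{\G\in\Psi_{i_t}}\|g_u^\G\|^2$ is the block-wise restriction to $\Psi_{i_t}$.

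Next I would apply a weighted Young's inequality with a free $\alpha>0$,
$$\|g_u\|_{(t)}\,\|g_t\|\;\le\;\tfrac{\alpha}{2}\,\|g_u\|_{(t)}^2+\tfrac{1}{2\alpha}\,\|g_t\|^2,$$
and take expectations. Conditioning on $g_u$ and invoking Property \ref{property:independence} (which makes $i_t$ independent of $g_u$ whenever $u<t$) yields
$$\E\|g_u\|_{(t)}^2\;=\;\sum_\G \E\bigl[\mathbf{1}\{\G\in\Psi_{i_t}\}\bigr]\,\E\|g_u^\G\|^2\;\le\;\Delta\,\E\|g_u\|^2,$$
since $\E[\mathbf{1}\{\G\in\Psi_{i_t}\}]=n_\G/n\le\Delta$ by the definition of $\Delta$. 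Choosing $\alpha=1/\sqrt{\Delta}$ balances the two coefficients and produces exactly $\tfrac{\sqrt{\Delta}}{2}(\E\|g_u\|^2+\E\|g_t\|^2)$.

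I expect the main subtlety to lie in the independence argument: the ``after read'' labeling of Property \ref{property:independence} has to be used carefully so that, for $u<t$, the sample $i_t$ is independent not only of $\hat{x}^u$ but of the whole of $g_u$, which depends on both $\hat{x}^u$ and $i_u$; this uses the i.i.d.\ sampling of indices across iterations. A second point that is easy to miss is the asymmetric choice in the Cauchy--Schwarz step: restricting to the support of $g_t$ only (rather than intersecting both supports) isolates a single indicator and contributes a single factor $\Delta$ in expectation, and it is precisely this single-$\Delta$ bound that can be split into two $\sqrt{\Delta}$ pieces by the optimal Young weight $\alpha=1/\sqrt{\Delta}$; symmetrizing too early would produce an inferior $\Delta$ factor.
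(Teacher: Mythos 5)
Your proof is correct. The paper does not prove this lemma at all---it imports it verbatim as Proposition~1 of \cite{leblond2017asaga}---and your argument is essentially the one given there: restrict the inner product to the support $\Psi_{i_t}$, apply Cauchy--Schwarz, use independence of $i_t$ from $g_u$ (for $u<t$) so that $\E\|g_u\|_{(t)}^2\le\Delta\,\E\|g_u\|^2$, and balance with the weighted Young inequality at $\alpha=1/\sqrt{\Delta}$. Your closing remarks about where the independence is actually used and why one must not symmetrize before taking expectations are exactly the right points of care.
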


\begin{lemma} \label{lm5}
We have following estimations:
\begin{eqnarray} 
\E\left\|\hat{{x}}_{t}^s - {x}_{t}^s \right\|^{2} \leq \eta^{2}(1+\sqrt{\Delta} \tau) \sum_{u=(t-\tau)_{+}}^{t-1} \E\left\|{g}_{u}^s\right\|^{2}
\end{eqnarray} 
\begin{eqnarray} 
\E\left\langle {g}_{t}^s, \hat{{x}}_{t}^s - {x}_{t}^s \right\rangle \leq \frac{\eta \sqrt{\Delta}}{2} \sum_{u=(t-\tau)_{+}}^{t-1} \E\left\| {g}_{u}^s \right\|^{2}+\frac{\eta \sqrt{\Delta} \tau}{2} \E\left\| {g}_{t}^s \right\|^{2}
\end{eqnarray} 
\end{lemma}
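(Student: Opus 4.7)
The plan is to first establish a perturbation representation of the read--write inconsistency $\hat{x}_t^s - x_t^s$ in terms of the past gradient-mapping iterates, and then control the resulting sums by invoking the sparse inner product bound of Lemma~\ref{lm4} twice. Under Assumption~\ref{assumption:overlapping}, every update issued before iteration $(t-\tau)_+$ has already been committed to shared memory and is therefore visible to the read at global time $t$. Combined with the atomicity of coordinate-wise writes and the ``after read'' labeling of Property~\ref{property:independence}, this lets me write
\[
\hat{x}_t^s - x_t^s \;=\; -\eta\sum_{u=(t-\tau)_+}^{t-1} M_u^t\, g_u^s,
\]
where each $M_u^t$ is a diagonal matrix with entries in $\{0,1\}$ selecting the coordinates of the $u$-th update that are ``missing'' from the read (with appropriate signs), so that $\|M_u^t g_u^s\| \le \|g_u^s\|$ and the sum contains at most $\tau$ terms.

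For the first inequality, I would take the squared norm of this representation and split it into diagonal and off-diagonal contributions:
\[
\|\hat{x}_t^s - x_t^s\|^{2} = \eta^{2}\sum_{u}\|M_u^t g_u^s\|^{2} + \eta^{2}\sum_{u\neq v}\langle M_u^t g_u^s, M_v^t g_v^s\rangle.
\]
The diagonal sum is dominated by $\eta^2\sum_u\|g_u^s\|^2$. For off-diagonal terms I would pass to absolute values and observe that the support of $\langle M_u^t g_u^s, M_v^t g_v^s\rangle$ is contained in that of $\langle g_u^s, g_v^s\rangle$, so Lemma~\ref{lm4} yields $\E|\langle g_u^s,g_v^s\rangle|\le \tfrac{\sqrt{\Delta}}{2}(\E\|g_u^s\|^{2}+\E\|g_v^s\|^{2})$. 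Summing these bounds over the at most $\tau(\tau-1)$ ordered pairs and using that each fixed index $u$ appears in at most $\tau$ pairs produces the contribution $\eta^{2}\sqrt{\Delta}\,\tau\sum_u\E\|g_u^s\|^{2}$, which together with the diagonal piece yields the factor $(1+\sqrt{\Delta}\,\tau)$ claimed.

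For the second inequality, I would substitute the same decomposition directly and pass to absolute values,
\[
\langle g_t^s, \hat{x}_t^s-x_t^s\rangle \;\le\; \eta \sum_{u=(t-\tau)_+}^{t-1} |\langle g_t^s, g_u^s\rangle|,
\]
and then apply Lemma~\ref{lm4} term by term (valid since $u<t$) to bound each $\E|\langle g_t^s,g_u^s\rangle|\le \tfrac{\sqrt{\Delta}}{2}(\E\|g_t^s\|^{2}+\E\|g_u^s\|^{2})$. The $\E\|g_u^s\|^{2}$ contributions sum to the first term on the right-hand side of the claim, while the $\E\|g_t^s\|^{2}$ contributions accumulate across the at most $\tau$ indices to give the second term $\tfrac{\eta\sqrt{\Delta}\,\tau}{2}\E\|g_t^s\|^{2}$.

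The main obstacle is justifying the decomposition rigorously: under asynchronous but coordinate-wise atomic writes, one must specify exactly what ``the $u$-th update'' means at the per-coordinate level, and verify that the ``after read'' convention of Property~\ref{property:independence} simultaneously (i)~keeps the sampled indices $i_u$ independent of the state $\hat{x}_t^s$ being analyzed and (ii)~confines the mismatch to at most $\tau$ recent iterations. Once this labeling is nailed down, both inequalities reduce to mechanical pairings of Lemma~\ref{lm4} with Young-type cross-term summations, and no further structural assumptions on $g_u^s$ are required.
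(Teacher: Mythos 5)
Your proposal is correct and follows essentially the same route as the paper: the same perturbation representation $\hat{x}_t^s - x_t^s = \pm\eta\sum_{u=(t-\tau)_+}^{t-1} M_u^t g_u^s$ with diagonal selection matrices justified by Assumption~\ref{assumption:overlapping}, the same diagonal/off-diagonal split with $\|M_u^t g_u^s\|\le\|g_u^s\|$, and the same term-by-term application of Lemma~\ref{lm4} to the cross products (the sign of the representation is immaterial since both arguments pass to absolute values before invoking Lemma~\ref{lm4}). The constant accounting also matches: the off-diagonal pairs contribute $\sqrt{\Delta}(\tau-1)_+\sum_u\E\|g_u^s\|^2\le\sqrt{\Delta}\tau\sum_u\E\|g_u^s\|^2$, exactly as in the paper.
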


\begin{proof}
By Assumption \ref{assumption:overlapping}, we have the following updates:
\begin{eqnarray} \label{eq:async}
 \hat x_t - x_t = \eta \sum_{u=(t - \tau)_+}^{t-1}G_{u}^t g(\hat x_{u}, \hat \alpha^u, i_{u}),
 \end{eqnarray} 
Thus, we have:
\begin{eqnarray} 
\E\|\hat x_t - x_t\|^2
& \leq&  \eta^2 \sum_{u, v=(t -\tau)_+}^{t-1} \E |\langle G_u^t g_{u}, G_v^t g_{v}\rangle | 
\nonumber \\
& \leq &  \eta^2 \sum_{u=(t -\tau)_+}^{t-1}\E \|g_{u}\|^2 
	+ \eta^2 \sum_{\substack{u, v=(t-\tau)_+ \\u\neq v}}^{t-1} \E  |\langle G_u^t g_{u}, G_v^t g_{v}\rangle | 
\nonumber \\
& \leq&   \eta^2 \sum_{u=(t -\tau)_+}^{t-1}\E \|g_{u}\|^2 
	+ \eta^2 \sum_{\substack{u, v=(t-\tau)_+ \\u\neq v}}^{t-1} \E |\langle g_{u}, g_{v}\rangle |
\nonumber \\
&\leq& \eta^2 \sum_{u=(t-\tau)_+}^{t-1}\E\|g_{u}\|^2 
	+ \eta^2 \sqrt{\Delta}(\tau-1)_+ \sum_{u=(t-\tau)_+}^{t-1}\E\|g_{u}\|^2 
\nonumber \\
&=& \eta^2 \big(1+\sqrt{\Delta}(\tau-1)_+ \big)\sum_{u=(t-\tau)_+}^{t-1}\E\|g_{u}\|^2 
\nonumber \\
&\leq& \eta^2 \big(1+\sqrt{\Delta}\tau \big)\sum_{u=(t-\tau)_+}^{t-1}\E\|g_{u}\|^2 .
\end{eqnarray} 
where the fourth inequality is obtained by Lemma \ref{lm4}. 

Taking the expectation of $\langle \hat x_t -x_t,  g_t\rangle$, we have:
\begin{eqnarray} 
\frac{1}{\eta} \E\langle \hat x_t - x_t, g_t \rangle 
&=& \sum_{u=(t - \tau)_+}^{t-1} \E \langle G_u^t g_u, g_t \rangle \nonumber \\
&\leq& \sum_{u=(t - \tau)_+}^{t-1} \E | \langle g_u, g_t \rangle | \nonumber \\
&\leq& \sum_{u=(t - \tau)_+}^{t-1} \frac{\sqrt{\Delta}}{2}(\E\|g_{u}\|^2 + \E\|g_{t}\|^2) 
\nonumber \\
&\leq& \frac{\sqrt{\Delta}}{2} \sum_{u=(t - \tau)_+}^{t-1}\E\|g_{u}\|^2 + \frac{\sqrt{\Delta}\tau}{2}\E\|g_{t}\|^2.
\end{eqnarray} 
This finishes the proof.
\end{proof}

\begin{lemma} \label{lm6}
For any $x \in \Re^{p}$ and gradient estimator $v_t^s$ computed by sample $i$, we have
\begin{eqnarray} 
 \E \left\| v_t^s - D_{i} \nabla \F(x^*) \right\|^2 \le 4L \E B_\F(\hat{x}_t^s, x^*) + 2L^2 \E \| x_0^s - x^* \|^2
\end{eqnarray} 
\end{lemma}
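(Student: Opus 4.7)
The plan is to add and subtract $\nabla f_i(x^*)$ inside the squared norm to split $v_t^s - D_i\nabla\F(x^*)$ into a ``current-iterate'' term and a ``snapshot'' control-variate term, then bound each piece separately. Concretely, I would write
$$v_t^s - D_i \nabla\F(x^*) = \bigl[\nabla f_i(\hat x_t^s) - \nabla f_i(x^*)\bigr] - \bigl[(\nabla f_i(x_0^s) - \nabla f_i(x^*)) - D_i(\nabla\F(x_0^s) - \nabla\F(x^*))\bigr],$$
and apply $\|a - b\|^2 \le 2\|a\|^2 + 2\|b\|^2$, reducing the claim to separate bounds on the two pieces.

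For the first piece, Lemma~\ref{lm2} applied at $\hat x_t^s$ and $x^*$ immediately gives $\E\|\nabla f_i(\hat x_t^s) - \nabla f_i(x^*)\|^2 \le 2L\,\E B_\F(\hat x_t^s, x^*)$, which after the factor of $2$ delivers exactly the $4L\,\E B_\F(\hat x_t^s, x^*)$ summand in the target bound.

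For the second piece I would exploit a Pythagorean-style cancellation enabled by the reweighting matrix $D_i$. Setting $u_i := \nabla f_i(x_0^s) - \nabla f_i(x^*)$ and $w := \nabla\F(x_0^s) - \nabla\F(x^*)$, the quantity of interest is $\E\|u_i - D_i w\|^2$. The key observation is that $\supp(\nabla f_i)\subseteq \Psi_i$ and $(D_i)_\G = d_\G I$ with $d_\G = n/n_\G$ for $\G\in\Psi_i$; a block-wise expansion of the inner product combined with $\frac{1}{n}\sum_i u_i = w$ and $\E\,\mathbbm{1}(\G\in\Psi_i) = n_\G/n = 1/d_\G$ yields both
$$\E\langle u_i, D_i w\rangle = \sum_{\G} d_\G \|w_\G\|^2 = \E\|D_i w\|^2.$$
Expanding the square, the cross term exactly cancels the $\E\|D_i w\|^2$ term, leaving $\E\|u_i - D_i w\|^2 = \E\|u_i\|^2 - \E\|D_i w\|^2 \le \E\|u_i\|^2$, which the per-sample Lipschitz bound (Assumption~\ref{assumption:lipschitz}) controls by $L^2\|x_0^s - x^*\|^2$. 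After the factor of $2$, this contributes the $2L^2\E\|x_0^s - x^*\|^2$ summand, and combining the two pieces finishes the proof.

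The main obstacle is the cross-term identity $\E\langle u_i, D_i w\rangle = \E\|D_i w\|^2$. Without it, a naive $\|a-b\|^2 \le 2\|a\|^2 + 2\|b\|^2$ split of $\|u_i - D_i w\|^2$ would leave an $\E\|D_i w\|^2$ factor that scales with $\max_\G d_\G$, which can be as large as $n$ and would wreck the clean $L^2$ constant. Establishing the identity is precisely where the choice $d_\G = n/n_\G$ pays off: it calibrates the reweighting so that $D_i w$ is, in expectation, the correct conditional projection of $u_i$, which is what allows the sparse control variate to behave like the dense SVRG one.
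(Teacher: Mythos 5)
Your proposal is correct and follows essentially the same route as the paper's proof: the same split of $v_t^s - D_i\nabla\F(x^*)$ into the $\hat x_t^s$-term (bounded via Lemma~\ref{lm2}) and the snapshot control-variate term, followed by expanding the square of the latter and using the calibration $\E D_i = I$ (equivalently $d_\G = n/n_\G$) so that the cross term cancels the $\E\|D_i w\|^2$ term, leaving $\E\|u_i\|^2 \le L^2\|x_0^s - x^*\|^2$. Your cancellation identity $\E\langle u_i, D_i w\rangle = \E\|D_i w\|^2 = \langle w, Dw\rangle$ is exactly the computation the paper performs, just written more explicitly.
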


\begin{proof}
Since $v_t^s$ is $\Psi_i$-sparse, we have
\begin{eqnarray} 
     \left\| v_t^s - D_{i} \nabla \F(x^*) \right\|^2 
    &=& \left\| v_t^s - D \nabla \F(x^*) \right\|_{(i)}^2 \notag \\
    &=& \left\|  \nabla \F_i (\hat{x}_t^s) - \nabla \F_i(x_0^s) + D \nabla \F(x_0^s) - D \nabla \F(x^*) \right\|_{(i)}^2 \notag \\
    &=& \left\| \nabla \F_i (\hat{x}_t^s) - \nabla \F_i(x^*) + D \nabla \F(x_0^s) - D \nabla \F(x^*) - (\nabla \F_i (x_0^s) - \nabla \F_i(x^*)) \right\|_{(i)}^2 \notag \\
    &\le& 2 \left\| \nabla \F_i (\hat{x}_t^s) - \nabla \F_i(x^*) \right\|_{(i)}^2  + 2 \left\| D \nabla \F(x_0^s) - D \nabla \F(x^*) - (\nabla \F_i (x_0^s) - \nabla \F_i(x^*)) \right\|_{(i)}^2
\end{eqnarray} 
According to the support set of $\nabla \F_i$ and Lemma \ref{lm2}, we get
\begin{eqnarray} 
 \E \left\| \nabla \F_i (\hat{x}_t^s) - \nabla \F_i(x^*) \right\|_{(i)}^2 \le 2L \E B_\F(\hat{x}_t^s, x^*)
\end{eqnarray} 
On the other hand,
\begin{eqnarray} 
    && \E \left\| D \nabla \F(x_0^s) - D \nabla \F(x^*) - (\nabla \F_i (x_0^s) - \nabla \F_i(x^*)) \right\|_{(i)}^2 \notag \\
    &=& \E \left\| D \nabla \F(x_0^s) - D \nabla \F(x^*) \right\|_{(i)}^2 + \E \left\| \nabla \F_i (x_0^s) - \nabla \F_i(x^*) \right\|_{(i)}^2 \notag \\
    &&  - 2\E \langle D \nabla \F(x_0^s) - D \nabla \F(x^*), \nabla \F_i (x_0^s) - \nabla \F_i(x^*) \rangle_{(i)} \notag \\
    &=& \E \left\| \nabla \F_i (x_0^s) - \nabla \F_i(x^*) \right\|^2 + \E \left\langle D_i \nabla \F(x_0^s) - D_i \nabla \F(x^*), D \nabla \F(x_0^s) - D \nabla \F(x^*) \right\rangle \notag \\
    &&  - 2\E \langle D \nabla \F(x_0^s) - D \nabla \F(x^*), \nabla \F_i (x_0^s) - \nabla \F_i(x^*) \rangle \notag \\
    &=& \E \left\| \nabla \F_i (x_0^s) - \nabla \F_i(x^*) \right\|^2 + \E \left\langle \nabla \F(x_0^s) - \nabla \F(x^*), D \nabla \F(x_0^s) - D \nabla \F(x^*) \right\rangle \notag \\
    &&  - 2\E \langle D \nabla \F(x_0^s) - D \nabla \F(x^*), \nabla \F(x_0^s) - \nabla \F(x^*) \rangle \notag \\
    &\le& \E \left\| \nabla \F_i (x_0^s) - \nabla \F_i(x^*) \right\|^2 \\
    &\le& L^2 \E \| x_0^s - x^* \|^2,
\end{eqnarray} 
where the second equality is derived by the definition of $D_i$ and the support of $\nabla \F_i$. In the third equality, we take expectation on $i$ and use $\E D_i = I_p$. In the first inequality, we use the fact that $D$ is a diagonal matrix with non-negative entries and hence $\left\langle \nabla \F(x_0^s) - \nabla \F(x^*), D \nabla \F(x_0^s) - D \nabla \F(x^*) \right\rangle \ge 0$. The last inequality comes from Assumption \ref{assumption:lipschitz}. Combining above inequalities, we complete the proof.
\end{proof}

\subsection{Proof of Theorem \ref{theorem:convergence}}
\begin{proof}
At the $s-1$ iteration, we conduct the elimination step over set $\B_{s-1}$ in the outer loop. Since the eliminated variables are zeroes at the optimal, all the sub-problems have the same optimal solution. We have
\begin{eqnarray} 
\| x_{\B_{s}}^* - x_{\B_{s-1}}^* \|^2 = 0,
\end{eqnarray}
where the norm is conducted on the corresponding coordinate and the eliminated variables in $B_{s}$ are filled with $0$. 

Meanwhile, denote $\tilde{\B_{s}} = \{j\in \B_{s-1} | j\notin \B_{s} \}$, we have 
\begin{eqnarray} 
\| x^0_{\B_{s-1}} - x_{\B_{s-1}}^* \|^2 = \| x^0_{\B_{s}} - x_{\B_{s}}^* \|^2 + \|x^0_{\tilde{\B_{s}}} - x_{\tilde{\B_{s}}}^* \|^2.
\end{eqnarray}
Considering $\|x^0_{\tilde{\B_{s}}} - x_{\tilde{\B_{s}}}^* \|^2\geq 0$, we have 
\begin{eqnarray} 
\| x^0_{\B_{s}} - x_{\B_{s}}^* \|^2 \leq \| x^0_{\B_{s-1}} - x_{\B_{s-1}}^* \|^2 .
\label{screening_step}
\end{eqnarray}
Moreover, according to the iteration in the inner loop, we have
\begin{eqnarray} 
 \|x_{\B_{s}}^{t+1}-x_{\B_{s}}^{*}\|^{2}  &=&\|x_{\B_{s}}^{t}-\eta {g}_{t}^{s}-x_{\B_{s}}^{*}\|^{2} \notag \\
&=&\|x_{\B_{s}}^{t}-x_{\B_{s}}^{*}\|^{2}+\|\eta {g}_{t}^{s}\|^{2}-2 \eta\langle{g}_{t}^{s}, x_{\B_{s}}^{t}-x_{\B_{s}}^{*}\rangle \notag \\
&=&\|x_{\B_{s}}^{t}-x_{\B_{s}}^{*}\|^{2}+\|\eta {g}_{t}^{s}\|^{2}-2 \eta\langle{g}_{t}^{s}, \hat{x}^{t}_{\B_{s}}-x_{\B_{s}}^{*}\rangle+2 \eta\langle{g}_{t}^{s}, \hat{x}^{t}_{\B_{s}}-x_{\B_{s}}^{t}\rangle.
\end{eqnarray} 
Applying Lemma \ref{lm3} to $\B_{s}$, we obtain
\begin{eqnarray} 
\|x_{\B_{s}}^{t+1}-x_{\B_{s}}^{*}\|^{2} &\le& \|x_{\B_{s}}^{t}-x_{\B_{s}}^{*}\|^{2} +2 \eta\langle{g}_{t}^{s}, \hat{x}^{t}_{\B_{s}}-x_{\B_{s}}^{t}\rangle + \eta^2(\beta - 1) \| {g}_{t}^{s} \|^2 \notag \\
&&  + \frac{\eta^2}{\beta} \| {v}_{t}^{s-1} - D_{i, {\B_{s}}} \nabla \F(x_{\B_{s}}^{*}) \|^2  - 2\eta \langle{v}_{t}^{s-1}-D_{i, {\B_{s}}} \nabla \F(x_{\B_{s}}^{*}), \hat{x}^{t}_{\B_{s}}-x_{\B_{s}}^{*}\rangle.
\end{eqnarray} 
Since $\E_i D_{i, {\B_{s}}} = I_{p_s}$, we have
\begin{eqnarray} 
\E \langle{v}_{t}^{s-1}-D_{i, {\B_{s}}} \nabla \F(x_{\B_{s}}^{*}), \hat{x}^{t}_{\B_{s}}-x_{\B_{s}}^{*}\rangle &=& \langle \nabla \F(\hat{x}^{t}_{\B_{s}}) -  \nabla \F(x_{\B_{s}}^{*}), \hat{x}^{t}_{\B_{s}}-x_{\B_{s}}^{*}\rangle \notag \\
& \ge& \frac{\mu}{2} \| \hat{x}^{t}_{\B_{s}}-x_{\B_{s}}^{*} \|^2 + B_\F(\hat{x}^{t}_{\B_{s}}, x_{\B_{s}}^{*}),
\end{eqnarray} 
where the inequality is obtained by applying Lemma \ref{lm1} to the sub-problem $\PP_s$. 

Combining above two inequalities we get
\begin{eqnarray} 
\|x_{\B_{s}}^{t+1}-x_{\B_{s}}^{*}\|^{2} &\le& \|x_{\B_{s}}^{t}-x_{\B_{s}}^{*}\|^{2} +2 \eta\langle{g}_{t}^{s}, \hat{x}^{t}_{\B_{s}}-x_{\B_{s}}^{t}\rangle + \eta^2(\beta - 1) \| {g}_{t}^{s}\|^2 \notag \\
&&  + \frac{\eta^2}{\beta} \| {v}_{t}^{s-1} - D_{i, {\B_{s}}} \nabla \F(x_{\B_{s}}^{*}) \|^2 - \eta\mu \| \hat{x}^{t}_{\B_{s}}-x_{\B_{s}}^{*} \|^2 - 2\eta B_\F(\hat{x}^{t}_{\B_{s}}, x_{\B_{s}}^{*}) \notag \\
&\le& (1 - \frac{\eta\mu}{2}) \|x_{\B_{s}}^{t}-x_{\B_{s}}^{*}\|^{2} +2 \eta\langle{g}_{t}^{s}, \hat{x}^{t}_{\B_{s}}-x_{\B_{s}}^{t}\rangle + \eta^2(\beta - 1) \| {g}_{t}^{s} \|^2 \notag \\
&&  + \frac{\eta^2}{\beta} \| {v}_{t}^{s-1} - D_{i, {\B_{s}}} \nabla \F(x_{\B_{s}}^{*}) \|^2   + \eta\mu \| \hat{x}^{t}_{\B_{s}}-x_{\B_{s}}^{t} \|^2 - 2\eta B_\F(\hat{x}^{t}_{\B_{s}}, x_{\B_{s}}^{*}),
\end{eqnarray} 
where in the second inequality we use $\|a+b\|^2 \le 2\|a\|^2 + 2\|b\|^2$. 

By considering Lemma \ref{lm5} and Lemma \ref{lm6} with $\B_{s}$, we have
\begin{eqnarray} 
  \E \|x_{\B_{s}}^{t+1}-x_{\B_{s}}^{*}\|^{2} 
    &\le& (1 - \frac{\eta\mu}{2}) \E \| x_{\B_{s}}^{t} - x_{\B_{s}}^{*} \|^{2} + \frac{2L^2\eta^2}{\beta} \E \| x^0_{\B_{s}} - x_{\B_{s}}^{*} \|^2 + \frac{4L\eta^2}{\beta} \E B_\F(\hat{x}^{t}_{\B_{s}}, x_{\B_{s}}^{*}) \notag \\
&&  - 2\eta \E B_\F(\hat{x}^{t}_{\B_{s}}, x_{\B_{s}}^{*}) + \eta^2 (\beta - 1 + \sqrt{\Delta} \tau) \E \left\| {g}_{t}^{s} \right\|^2 \notag \\
    &&   + (\eta^{3} \mu (1 + \sqrt{\Delta} \tau) + \eta^2 \sqrt{\Delta}) \sum_{u=(t-\tau)_{+}}^{t-1} \E \left\| {g}_{u}^{s} \right\|^{2}.
\end{eqnarray} 
Because $4L\eta \le 1$, $\sqrt{\Delta} \tau \le \frac{1}{10}$ and $B_\F(\cdot, \cdot)$ is non-negative, let $\beta = \frac{1}{2}$, we have
\begin{eqnarray} 
 \E \|x_{\B_{s}}^{t+1}-x_{\B_{s}}^{*}\|^{2} &\le& (1 - \frac{\eta\mu}{2}) \E \| x_{\B_{s}}^{t} - x_{\B_{s}}^{*} \|^{2} + 4L^2\eta^2 \E \| x^0_{\B_{s}} - x_{\B_{s}}^{*} \|^2 - \frac{2\eta^2}{5} \E \left\| {g}_{t}^{s} \right\|^2 \notag \\
    &&  + (2 \eta^3 \mu + \eta^2 \sqrt{\Delta}) \sum_{u=(t-\tau)_{+}}^{t-1} \E \left\| {g}_{u}^{s} \right\|^{2}.
\end{eqnarray} 
Applying recursion to above inequality and we can obtain
\begin{eqnarray} 
\label{recursion}
 \E \|x^K_{\B_{s}}-x_{\B_{s}}^{*}\|^{2} \le \left((1 - \frac{\eta\mu}{2})^K + 8\kappa L\eta\right) \E \| x^0_{\B_{s}} - x_{\B_{s}}^{*} \|^{2} - \sum_{u=0}^{K-1} r_u^s \E \| {g}_{u}^{s} \|^2,
\end{eqnarray} 
where
\begin{eqnarray} 
    r_u^s = (1 - \frac{\eta\mu}{2})^{K-1-u} \Big{(} \frac{2\eta^2}{5} - (2 \eta^3 \mu + \eta^2 \sqrt{\Delta}) \sum_{m=0}^{\tau-1} (1 - \frac{\eta\mu}{2})^{-m} \Big{)}.
\end{eqnarray} 
Let $h(x) = \log(1 + 2ax) - x\log(1 + a)$ for some positive $a$. We can see $h(0) = 0$ and $h'(x) = \frac{2a}{1 + 2ax} - \log(1+a)$. If $ax \le \frac{1}{2}$, then we have $h'(x) \ge a - \log(1+a) \ge 0$. Therefore, when $ax \le \frac{1}{2}$, we always have $h(x) \ge 0$ and $(1 + a)^x \le 1 + 2ax$. Substitute $a$ with $\frac{\eta\mu}{2 - \eta\mu}$. Since $\eta\mu\tau \le \frac{1}{10}$, we have $\frac{\eta\mu\tau}{2 - \eta\mu} \le \frac{1}{2}$. Then we can estimate the lower bound of $r_u^s$:
\begin{eqnarray} 
    r_u^s \ge (1 - \frac{\eta\mu}{2})^{K-1-u} \Big{(} \frac{2\eta^2}{5} - (2 \eta^3 \mu + \eta^2 \sqrt{\Delta}) (\tau + \frac{\eta\mu\tau^2}{2 - \eta\mu}) \Big{)} \ge (1 - \frac{\eta\mu}{2})^{K-1-u} \frac{7\eta^2}{100},
\end{eqnarray} 
where we also use $\eta\mu\tau \le \frac{1}{10}$ and $\sqrt{\Delta}\tau \le \frac{1}{10}$. As $r_u^s$ is positive, we can drop the last term in Eq. (\ref{recursion}). 

As $\eta\mu \le \frac{1}{2}$, it is easy to check that $\log(1 + \frac{\eta\mu}{2 - \eta\mu}) \ge \frac{\eta\mu}{2(2 - \eta\mu)}$. When $K = \frac{4\log 3}{\eta\mu}$, we have $(1 - \frac{\eta\mu}{2})^K \le \frac{1}{3}$. Since $\eta \le \frac{1}{24\kappa L}$, we can prove
\begin{eqnarray} 
 \E \|x^K_{\B_{s}}-x_{\B_{s}}^{*}\|^{2} \le \frac{2}{3} \E \| x^0_{\B_{s}} - x_{\B_{s}}^{*} \|^{2}.
\end{eqnarray} 
Note $x^K_{\B_{s}} = x_{\B_{s}} $ and $x^0_{\B_{s-1}} = x_{\B_{s-1}} $, combining (\ref{screening_step}), we have 
\begin{eqnarray} 
 \E \|x_{\B_{s}}-x_{\B_{s}}^{*}\|^{2} \le \frac{2}{3} \E \| x_{\B_{s-1}} - x_{\B_{s-1}}^{*} \|^{2}.
\end{eqnarray} 
Apply recursion to the above inequality, note $x_{\B_{0}} = x_0$ and $x_{\B_{0}}^{*} = x^*$ we can obtain
\begin{eqnarray} 
 \E \|x_{\B_{S}}-x_{\B_{S}}^{*}\|^{2} \le (\frac{2}{3})^S \E \| x_{0} - x^{*} \|^{2},
\end{eqnarray} 
which finishes the proof.
\end{proof}

\subsection{Proof of Theorem \ref{theorem:screening}}
\begin{proof}
Based on Theorem \ref{theorem:convergence}, we know our DDSS method has a converging sequence $\{ x_{\B_{s}}\}$. As DDSS algorithm converges, because of the strong duality, the dual $y^s$ and the intermediate duality gap $\PP(x_{\B_{s}})-D(y^s)$ also converges. For any given $\epsilon$,  $\exists S_{0}$ such that $\forall s \geq S_{0}$, we have 
\begin{eqnarray} \label{converge1}
    \| y^s - y^* \|_{2} \leq  \epsilon,
\end{eqnarray}
and
\begin{eqnarray} \label{converge2}
 \sqrt{2L(\PP(x_{\B_{s}})-D(y^s))}\leq  \epsilon. 
\end{eqnarray}
almost surely.

For any $j \notin \B^{*} $, we have 
\begin{eqnarray} 
&& \Omega^D_j(A^\top_{j} y^s) + \Omega^D_j(A_{j}) \sqrt{2L( \PP(x_{\B_{s}})-D(y^s))} \nonumber \\
& \leq &  \Omega^D_j(A^\top_{j} (y^s- y^*)) + \Omega^D_j(A^\top_{j}  y^*)  + \Omega^D_j(A_{j})  \sqrt{2L( \PP(x_{\B_{s}})-D(y^s))} \\
& \leq &  2 \Omega^D_j(A_{j}) \epsilon + \Omega^D_j(A^\top_{j}  y^*)  \nonumber
\end{eqnarray}
where the first inequality comes from the triangle inequality and the second inequality comes from (\ref{converge1}) and (\ref{converge2}). 

Hence, if we choose  
\begin{eqnarray} \label{eq:epsilon}
    \epsilon < \frac{ n \lambda-\Omega^D_j(A^\top_{j}  y^*) }{2 \Omega^D_j(A_{j})},
\end{eqnarray}
we can ensure the screening test $\Omega^D_j(A^\top_{j} y^s) + \Omega^D_j(A_{j}) \sqrt{2L( \PP(x_{\B_{s}})-D(y^s))} < n \lambda $ holds for $j$, which means variable block $j$ is eliminated at most at this iteration. In (\ref{eq:epsilon}), since  $j \notin \B^{*}$, it is easily to verify that $n \lambda-\Omega^D_j(A^\top_{j}  y^*) > 0 $.  This finishes the proof.
\end{proof}

\end{document}